\documentclass{article}
\usepackage[preprint]{colm2026_conference}

\usepackage{microtype}
\usepackage{hyperref}
\usepackage{url}
\usepackage{booktabs}
\usepackage{graphicx}
\usepackage{amsmath,amsfonts,amssymb}
\usepackage{amsthm}
\usepackage{bm}
\usepackage{multirow}
\usepackage{xcolor}
\usepackage{subcaption}
\usepackage{lineno}
\usepackage{pgfplots}
\pgfplotsset{compat=1.18}
\usepgfplotslibrary{groupplots,fillbetween}
\usetikzlibrary{calc}

\newtheorem{theorem}{Theorem}

\newtheorem{assumption}{Assumption}

\definecolor{darkblue}{rgb}{0, 0, 0.5}
\hypersetup{colorlinks=true, citecolor=darkblue, linkcolor=darkblue, urlcolor=darkblue}

\newcommand{\GMI}{\mathrm{GMI}}
\newcommand{\Llog}{L_{\log}}
\newcommand{\PT}{P_{\!T}}
\newcommand{\PM}{P_{\!M}}
\newcommand{\Wone}{W_{\!1}}
\newcommand{\E}{\mathbb{E}}
\newcommand{\R}{\mathbb{R}}

\title{Modality Collapse as Mismatched Decoding: \\ Information-Theoretic Limits of Multimodal LLMs}

\author{Jayadev Billa\thanks{Unaffiliated researcher; previously at ISI@USC, Yahoo, Nuance, and BBN.} \\
\texttt{jbilla2004@gmail.com}}

\begin{document}

\ifcolmsubmission
\linenumbers
\fi

\maketitle

\begin{abstract}
Numerous studies have shown that multimodal LLMs process speech and images well but fail in non-intuitive ways rendering trivial tasks such as object counting unreliable. We investigate this behavior from an information-theoretic perspective by framing multimodal LLM inference as a \emph{mismatched decoder} problem: a decoder trained primarily on text can only extract information along text-aligned directions (removing up to 98\% of the variation in modality-specific (non-text) directions \emph{improves} decoder loss) and the amount of accessible information is bounded by the Generalized Mutual Information (GMI). We show that information loss is bounded as the distributional mismatch between the source data and the text data increases, and as the sensitivity of the decoder increases. This bound is a function of the model's scoring rule not its architecture. 
We validate the predictions across five models spanning speech and vision. A controlled study (two Prismatic VLMs differing only in encoder text-alignment) shows that the bottleneck lies in the scoring rule of the decoder rather than the text-alignment of the encoder or the learned projection. A LoRA intervention demonstrates that simply training with an emotion-related objective improves emotion detection from 17.3\% to 61.8\% task accuracy without affecting other attributes, confirming that the training objective determines what becomes
accessible.

\end{abstract}

\ifcolmsubmission\else
\noindent\url{https://github.com/jb1999/modality_collapse_paper}
\fi

\section{Introduction}
\label{sec:intro}

Multimodal LLMs follow a common recipe: an encoder processes the non-text input (speech, image), a learned projection maps it into the LLM's embedding space, and the LLM generates a text response~\citep{Liu2023, Dai2023, Fixie2024, Chu2024, Karamcheti2024}. These systems appear to perform well on standard benchmarks. However, there are many reported gaps in model performance~\citep{Chen2025listen,Cuervo2026salad,Thrush2022,Yuksekgonul2023,Jain2025} where even some trivial tasks are not resolved. A model might produce a perfect response to speech input but have no understanding of the emotional state of the speaker; it may correctly identify every object in an image and yet fail to count the number of objects or understand the spatial relationships. Here, the information is clearly available to the model, since it can list every object, but it cannot make use of this available information for all tasks. We refer to this selective failure as \emph{modality collapse}.

To provide insight into this behavior we borrow the well-defined concept of a \emph{mismatched decoder} from communication theory, where a decoder originally designed for source A receives source B. A multimodal LLM fits this definition of a mismatched decoder exactly: an LLM trained to "decode" text representation is now expected to extract information from audio and/or image projections. Our mismatched decoder analysis is a formalization of the perception constraint cost~\citep{Blau2019}.

In this framing the accessible information is bounded by the Generalized Mutual Information (GMI) not standard mutual information, as would be the case for matched decoders. We refer to this difference as the \emph{information accessibility gap}. The gap is not about architecture; it is about the scoring rule. No explicit adapter module is required: any system where a text-trained decoder processes non-text representations faces the same constraint, whether the representations arrive through a learned projection, a discrete codebook, or the LLM's own layers. At inference, every trained model has fixed weights: the scoring rule is whatever the training procedure produced. A model trained predominantly on text has a text-shaped scoring rule regardless of whether the LLM was frozen or fine-tuned during training. What matters is the training objective, not which components were updated (\S\ref{sec:implications}).

What causes this gap? A transformer model is sensitive to its input in all directions; one trained on text is still sensitive to non-text directions (we validate this empirically using gradient isotropy in \S\ref{sec:bound_validation}). That said, its sensitivity to non-text directions is unproductive. The decoder treats unfamiliar structure as noise, and noise hurts its text processing, as would be expected in a \emph{mismatched decoder}. In \S\ref{sec:causal_ablation} we empirically show that removing up to 98\% of the variation in modality-specific (non-text) directions \emph{improves} decoder loss; the decoder is not indifferent to non-text directions, but responds to them in destructive ways. We can quantify this degradation (Theorem~\ref{thm:gmi_wasserstein}). It is proportional to both the distributional distance between the modality and text representations and the decoder's sensitivity to this distance (its Lipschitz constant $\Llog$, which measures how much the output changes per unit change in the input). This sensitivity can be up to $30\times$ that of a linear probe (a classifier trained on frozen representations to test what information is present; Theorem~\ref{thm:probe}, \S\ref{sec:asymmetry}). Text-aligned encoders reduce the distance (\S\ref{sec:controlled}), but only by discarding non-textual information earlier on. The collapse occurs earlier, not less.

\textbf{Contributions.} \textbf{(1)}~We formalize modality collapse as mismatched decoding, and prove that the accessible information is at most the GMI, and degrades with the distributional mismatch and the sensitivity of the decoder (\S\ref{sec:theory}). \textbf{(2)}~On five models and two modalities, we show that the information accessibility gap is present: non-text information is preserved by the LLM but is not decodable because the decoder is not incentivized to use it (\S\ref{sec:causal_ablation}, \S\ref{sec:deep_dive}). \textbf{(3)}~We run a controlled study on Prismatic VLMs (identical architecture, identical LLM, only text-alignment of the encoder changes), and show that the scoring rule is the causal factor (\S\ref{sec:controlled}). \textbf{(4)}~We run a LoRA intervention that confirms the prescription: training with an emotion objective improves emotion task accuracy from 17.3\% to 61.8\% ($+$7.5\% probe), while training with a text objective does not (\S\ref{sec:implications}).

\section{Related work}
\label{sec:related}

\textbf{Multimodal LLMs and the adapter paradigm.}
The practice of projecting encoded representations into the embedding space of an LLM has been widely adopted for both vision~\citep{Liu2023, Dai2023, Zhu2024, Karamcheti2024, Tong2024} and speech~\citep{Fixie2024, Chu2024}. While these approaches perform well on text-centric tasks, they perform poorly on tasks that require modality-specific knowledge, a phenomenon observed empirically but not understood theoretically.

\textbf{Modality gap.}
\citet{Liang2022} observed that the CLIP embeddings of different modalities are separated by a ``modality gap'' in the shared space. More recently, \citet{Huh2024} reported that the representations of different modalities become similar. While our results are largely orthogonal to these geometric findings, we provide an information-theoretic explanation for the modality gap: it is not that information is lost, but rather that the decoder is unable to make use of information in unfamiliar directions.

\textbf{Modality collapse.} \citet{Javaloy2022} use this term to describe models that discard an entire modality during training due to gradient conflicts. Our usage describes a subtler failure: the model processes all modalities and preserves their information internally, but the decoder selectively fails to use non-primary-modality content.

\textbf{Mismatched decoding.} The mismatched decoder framework~\citep{Merhav1994, Lapidoth1996, Scarlett2020} provides the data rates supported by a decoder mismatched to its incoming channel. We borrow this approach and apply it to the multimodal LLM framework. At inference time, any LLM that uses a scoring rule that was learned via a text-dominant training procedure is a mismatched decoder when applied to non-text inputs. The mismatch arises from the loss function, not from what parameters were updated.

\textbf{Representation probing.} Linear probes are a common method for probing the information content of neural representations~\citep{Alain2017, Conneau2018, Hewitt2019, Belinkov2022}. We do not use probing to argue that some information is ``encoded'' in a representation, but to highlight the difference between information that is \emph{present} (probe-recoverable) and information that is \emph{accessible} (decoder-extractable).

\textbf{Rate-distortion-perception (RDP) in ML.}
\citet{Blau2019} formalized perceptual quality as a distributional constraint: reconstructions are perceptually good when their distribution matches the real data distribution. The text-trained decoder imposes a perception constraint (text-aligned directions), forcing the adapter to compromise modality-specific information to satisfy the constraint. We draw on this framework in \S\ref{sec:formulation}, where the decoder takes the role of the perceptual evaluator.

\section{Problem formulation}
\label{sec:formulation}

\textbf{Architecture.}
We consider the typical multimodal LLM pipeline:
\begin{equation}
X \xrightarrow{E} H \xrightarrow{A} Z \xrightarrow{L(\cdot, C)} \hat{Y}
\label{eq:pipeline}
\end{equation}
where $X$ is a non-text input (speech waveform or image), $E$ is a pre-trained encoder (e.g., Whisper~\citep{Radford2023}, CLIP~\citep{Radford2021}), $A$ is a learned adapter, $Z \in \R^d$ are the hidden states consumed by the LLM $L$, $C$ is the text context (prompt), and $\hat{Y}$ is the generated output. The specific form of $A$ varies (linear, MLP, Q-Former~\citep{Dai2023}, perceiver, codebook), but the mismatch depends on the scoring rule, not the adapter type (\S\ref{sec:implications}).

\textbf{Scoring rule, text law and modal law.}
Let $q(y|z,c)$ denote the LLM's next-token distribution at inference. This is the fixed scoring rule resulting from training. During pre-training, the LLM processes hidden states drawn from the text law $\PT(C, Z, Y)$. When processing adapted non-text inputs, the hidden states follow a different modal law $\PM(C, Z, Y)$.

We assume the text and modal laws share the same prompts and target tokens ($\PM(C,Y) = \PT(C,Y)$), so the only difference is in the representations $Z$. The quantitative bound (\S\ref{sec:theory}) additionally requires that the decoder's log-score is Lipschitz continuous and that representations occupy a bounded region  (Assumptions~\ref{ass:lipschitz},~\ref{ass:bounded} in Appendix~\ref{app:bound}, validated empirically in \S\ref{sec:bound_validation}).

\textbf{Formalizing modality collapse.}
The earlier intuitive explanation of modality collapse (information is available but the decoder cannot use it) can be stated precisely. For an attribute $S_\tau$
  of type $\tau$ (e.g., speaker identity, emotion), we define the \emph{information accessibility gap} as:
\begin{equation}
\Delta_{\text{access}}(\tau) = I(Z; S_\tau) - \GMI_{\PM}(S_\tau \mid q)
\label{eq:accessibility_gap}
\end{equation}
where $I(Z; S_\tau)$ is the information about $S_\tau$ present in the representation and $\GMI_{\PM}(S_\tau | q)$ is the information actually extractable by the decoder's fixed scoring rule. When $\Delta_{\text{access}} = 0$, the decoder extracts everything present; when $\Delta_{\text{access}}$  is large, information is present but inaccessible.

\textbf{Implicit RDP tradeoff.}
\citet{Blau2019} formalized perceptual quality as a distributional constraint: reconstructions are perceptually good when their distribution matches the real data distribution, measured as a divergence $d(p_X, p_{\hat{X}})$. In our setting, the perceptual evaluator is the decoder itself. A text-trained decoder implicitly judges representations by how closely they resemble its training distribution $\PT$; representations drawn from a different distribution $\PM$ are treated as low quality regardless of their information content. When $\PM \neq \PT$, the adapter must sacrifice modality-specific content (distortion) to produce representations the decoder can process. We measure this perception cost using the Wasserstein distance $\Wone(\PM, \PT)$, chosen because it aligns with the Lipschitz continuity of the decoder to yield the GMI bound (\S\ref{sec:theory}); other divergences would capture the same mismatch but without this property. The role of adapter capacity (the rate dimension of the tradeoff) is a direction for future work.

\textbf{Text-aligned encoders as correlation mappings.}
Contrastive encoders like CLIP~\citep{Radford2021} and SigLIP~\citep{Zhai2023} do not merge the modality manifolds (the modality gap persists~\citep{Liang2022}), but they align the \emph{informative directions}: attributes that co-occur with text descriptions occupy dimensions also used by text representations. When such an encoder feeds an LLM, the modal law $\PM$ has high overlap with $\PT$ along informative dimensions, so the distributional distance is small and the accessibility gap is correspondingly small (we formalize this in terms of the Wasserstein distance in \S\ref{sec:bound}). Non-contrastive encoders like DINOv2~\citep{Oquab2024} lack this alignment: their informative directions may be orthogonal to $\PT$, making the information effectively inaccessible to the decoder. We return to this in \S\ref{sec:controlled}, where the Prismatic comparison provides direct evidence.

\section{Theoretical framework}
\label{sec:theory}

The mismatched decoder framing makes three concrete predictions. First, since the decoder was trained on text, there should be an upper limit on how much non-text information it can extract, regardless of how much is present in the representation. Second, this limit should get worse as the modality representations diverge from text. Third, a simple classifier (a linear probe) trained directly on the representations should not be subject to the same limit, since it is not constrained by a text-trained scoring rule. We formalize each below (formal statements, assumptions, and proofs in Appendix~\ref{app:theory}):

\textbf{1.\ The decoder has an upper bound.}  The standard mutual information $I(Z;Y)$ represents the information $Z$ contains about $Y$ assuming an optimal decoder. But the LLM is not an optimal decoder for non-text inputs, given a text-focused training objective. Instead, the relevant quantity is the Generalized Mutual Information (GMI)~\citep{Merhav1994, Scarlett2020}, the maximum rate achievable with a fixed scoring rule. No matter how much information any probe can get from $Z$, the decoder is always bounded by $\GMI_{\PM}(q)$ (Theorem~\ref{thm:accessible_rate}).

\textbf{2.\ The ceiling drops with distributional distance.}  The GMI decreases when the modal representations $\PM$ differ from the text representations $\PT$ the decoder was trained on. How much it drops depends on two measurable quantities: how far the modal representations are from text (the Wasserstein distance $\Wone$), and how sensitive the decoder is to that difference (the Lipschitz constant $\Llog$). The drop is bounded by their product (Theorem~\ref{thm:gmi_wasserstein}). Both quantities can be estimated from a trained model ($\Wone$ from the representations, $\Llog$ from gradient norms), so the bound yields a testable prediction: models with
larger $\Llog \cdot \Wone$ should show more degradation. We validate this ranking in \S\ref{sec:bound_validation}. In the high-mismatch regime the bound is numerically loose, but it remains an ordinal diagnostic: it correctly ranks model--modality pairs by collapse severity and predicts which information types degrade (Appendix~\ref{app:support} tightens the prefactor from ${\sim}e^{150}$ to ${\sim}e^{7}$).

\textbf{3.\ Probes and decoders respond differently to the same shift.}  For a linear probe, the effect of a distributional shift is bounded by the (small) Lipschitz constant $L_h$ (Theorem~\ref{thm:probe}). For the decoder, it is bounded by $\Llog$, which is empirically 30$\times$ larger. A $\Wone$ shift of the same magnitude therefore results in a small degradation for probes and a large degradation for the decoder. This is the formal basis for the accessibility gap: the same information can be simultaneously recoverable by a probe and unusable by the decoder, because the two have vastly different sensitivity to distributional mismatch.

The three propositions above establish the perception-distortion face of the RDP tradeoff: as the distributional distance $\Wone(\PM, \PT)$ grows, accessible information (GMI) decreases, and the decoder is disproportionately more affected than a probe. 

\section{Experimental validation}
\label{sec:experiments}

\subsection{Setup}
\label{sec:setup}

\textbf{Models.}
We investigate five multimodal LLMs across two modalities (Table~\ref{tab:models}). For speech: \textbf{Ultravox-v0.6}~\citep{Fixie2024} (Whisper encoder, MLP adapter with SwiGLU, Llama-3.1-8B) and \textbf{Qwen2-Audio-7B}~\citep{Chu2024} (Whisper encoder, linear adapter, Qwen2-7B). For vision: \textbf{LLaVA-v1.5-7B}~\citep{Liu2023} (CLIP-ViT, MLP adapter, Vicuna-7B) and two \textbf{Prismatic VLMs}~\citep{Karamcheti2024} with the same MLP adapter and Vicuna-7B backbone but with different vision encoders: \textbf{DINOv2} (not text-aligned) and \textbf{SigLIP} (text-aligned via contrastive loss).

\begin{table}[t]
\centering
\small
\begin{tabular}{llll}
\toprule
\textbf{Model} & \textbf{Encoder} & \textbf{Text-aligned?} & \textbf{LLM} \\
\midrule
Ultravox & Whisper & No & Llama-3.1-8B \\
Qwen2-Audio & Whisper & No & Qwen2-7B \\
LLaVA & CLIP-ViT & Yes & Vicuna-7B \\
Prismatic-D & DINOv2 & No & Vicuna-7B \\
Prismatic-S & SigLIP & Yes & Vicuna-7B \\
\bottomrule
\end{tabular}
\caption{Models studied. Prismatic-D and Prismatic-S share the same architecture, adapter, training recipe, and LLM backbone; only the vision encoder differs.}
\label{tab:models}
\end{table}

\textbf{Datasets.}
Speech: LibriSpeech~\citep{Panayotov2015} (2,620 samples, 50 words, 40 speakers), CREMA-D~\citep{Cao2014} (7,442 samples, 6 emotions, 91 speakers), ESC-50~\citep{Piczak2015} (2,000 samples, 50 sound classes). Vision: MS-COCO~\citep{Lin2014} (5,000 images, 69 object categories, 12 super-categories).

\textbf{Probing protocol.}
We consider four hook points per model: encoder output, adapter output, LLM layer 16, and LLM final layer (after LayerNorm). Representations are mean-pooled across sequence positions. Ultravox also provides audio token boundaries, but pooling over audio-only tokens results in nearly identical trends (see Appendix Table~\ref{tab:audio_only}). For each hook point, information type and dataset combination, we train $\ell_2$-regularized logistic regression probes (5 seeds, 80/20 stratified split, $z$-score normalization).

\textbf{Information types.}
Speech: \emph{lexical} (word identity), \emph{speaker} (speaker identity), \emph{emotion} (6-way), \emph{acoustic} (sound class). Vision: \emph{lexical} (caption-based word prediction), \emph{object category} (69-way), \emph{super-category} (12-way).

The Prismatic pair (shared architecture, different encoder) and the two speech models (shared encoder, different LLM) serve as controls that allow us to isolate these specific variables.

\subsection{The information accessibility gap in practice}
\label{sec:deep_dive}


Linear probes across layers show that for all 5 models, non-text information is not destroyed by the LLM  (Appendix Tables~\ref{tab:full_probes} and~\ref{tab:retention}): at the final layer, speaker identity is $22\times$ above chance for Ultravox and object categories are $55\times$ above chance in Prismatic-D ($.792$ vs.\ $.014$). The information is preserved. Does the decoder use it?

The answer depends on text-alignment. In both speech models, the LLM approximately doubles lexical information ($+$92\% Ultravox, $+$95\% Qwen2-Audio), but degrades speaker identity ($-$8\% to $-$39\%). When the encoder is text-aligned (CLIP in LLaVA, SigLIP in Prismatic-S), $\PM \approx \PT$ and all information types improve ($+$5--8\%). The decoder amplifies what it was trained on and is indifferent to the rest. The LISTEN benchmark~\citep{Chen2025listen} independently observed this pattern: audio LLMs predict emotions based on what the speaker \emph{says} rather than \emph{how} they say it. Our framework explains why: the text-shaped scoring rule amplifies lexical content ($+$92--95\%) while remaining indifferent to acoustic features, so emotion predictions are driven by word choice rather than prosody.

The same pattern appears in vision, though in a weaker mismatch regime (Appendix Table~\ref{tab:prismatic}). Three visual features not present in text (object count, object size, spatial distribution) show flat or modest improvement through the LLM ($\Llog \cdot \Wone \approx 13$--$54$ vs.\ $162$ for speech), consistent with stagnation unlike the sharp degradation in speech.

\subsection{Mode alignment}
\label{sec:mode_alignment}

The bound describes how much information is lost; mode alignment tells us the directions in which it is lost. We perform a principal component analysis of the adapter's output representations under the modal distribution, yielding a covariance matrix $\Sigma_M$ with eigenvectors $u_k$ and eigenvalues $\lambda_k$, sorted by decreasing eigenvalue. We refer to each eigenvector as a \emph{mode}: Mode~0 is the direction along which the adapter's output varies the most, Mode~1 the next, and so on. For each mode, we ask: has the decoder seen variation along this direction during training? The alignment score $$\rho_k = u_k^\top \Sigma_T\, u_k \,/\, \lambda_k$$ is the Rayleigh quotient of the text covariance $\Sigma_T$ evaluated along the modal eigenvector $u_k$, normalized by the modal eigenvalue $\lambda_k$. The numerator measures how much the text distribution varies along direction $u_k$; the denominator is how much the modality varies along the same direction. When $\rho_k = 1$, the text and modal distributions have equal variance along $u_k$, so the decoder has been trained on exactly this kind of variation and can use it. When $\rho_k \approx 0$, the modality varies along a direction where text has almost no variance: the decoder has never seen input spread along this direction and cannot productively respond to it. We label modes with $\rho_k < 0.5$ as modality-specific (MS); the threshold is a practical cutoff, and results are not sensitive to the exact value (Appendix Table~\ref{tab:mode_alignment}).

For non-aligned speech encoders, Mode~0 (the direction of largest adapter variance) is modality-specific ($\rho_0 \leq 0.034$), accounting for 51--79\% of adapter variance. For Prismatic-DINOv2, Mode~0 is also modality-specific ($\rho_0 = 0.034$) but carries only 8\% of variance; the MS variance is instead distributed across 53 modes (71\% total; Table~\ref{tab:mode_alignment}). In both cases, the dominant source of variation in the adapter's output is invisible to the text distribution. For SigLIP, even Mode~0 is text-aligned ($\rho_0 = 0.83$): the encoder has already organized its output along text-like directions. As representations pass through the LLM, text-aligned variance is amplified (${\sim}150\times$) while MS variance is unaffected: MS modes go from 63.6\% of adapter variance to $<$1\% at the final layer in Ultravox, not destroyed but drowned out.

\subsection{Modality-specific structure interferes with decoding}
\label{sec:causal_ablation}

Probes show that non-text information is present; causal ablation tests whether the decoder is impacted by it. Unlike concept erasure methods such as LEACE~\citep{Belrose2023}, which eliminate directions predictive of a labeled attribute, we ablate directions identified by distributional mismatch without concept labels. For each mode classified as MS ($\rho_k < 0.5$) in \S\ref{sec:mode_alignment}, we remove its contribution from the representation: $z' = z - \sum_{k \in S}(z^\top u_k),u_k$, and compare three conditions (200 samples per model): (i)~ablate all MS modes, (ii)~ablate a matched number of top TA modes, (iii)~no ablation.

\begin{table}[t]
\centering
\small
\begin{tabular}{llccc}
\toprule
\textbf{Model} & \textbf{Condition} & \textbf{Modes} & \textbf{Var.\ \%} & \textbf{$\Delta$Loss (\%)} \\
\midrule
\multicolumn{5}{@{}l}{\emph{Non-aligned encoders}} \\[2pt]
Ultravox     & MS (all)     & 11  & 63.6  & $-$1.4 \\
             & TA-matched   & 11  & 29.4  & $-$0.3 \\
             & Random       & 11  & 2.8   & $+$0.3 \\
Qwen2-Audio  & MS (all)     & 5   & 97.6  & $-$0.1 \\
             & TA-matched   & 5   & 1.8   & $+$1.0 \\
             & Random       & 5   & 0.1   & $+$0.2 \\
Prismatic-D  & MS (all)     & 53  & 71.0  & $-$11.1 \\
             & TA-matched   & 47  & 29.0  & $-$0.07 \\
             & Random       & 53  & 39.1  & $-$5.1 \\
\midrule
\multicolumn{5}{@{}l}{\emph{Text-aligned encoder}} \\[2pt]
Prismatic-S  & MS (all)     & 14  & 24.6  & $-$0.5 \\
             & TA-matched   & 14  & 36.2  & $-$0.7 \\
             & Random       & 14  & 10.6  & $-$0.2 \\
\bottomrule
\end{tabular}
\caption{Causal ablation at the adapter output. Eigenmodes are classified as modality-specific (MS; $\rho_k < 0.5$) or text-aligned (TA), and matched subsets are projected out. Random controls ablate the same number of modes chosen uniformly at random (seed-averaged). For non-aligned encoders, removing MS modes improves loss while removing equal-sized TA subsets has negligible effect; the contrast is $160\times$ for Prismatic-D ($t = -25.2$ vs.\ $t = -0.4$). Random ablation gives intermediate effects proportional to variance removed. For SigLIP, few modes are MS and all effects are ${<}1$\%.}
\label{tab:causal_ablation}
\end{table}

\textbf{MS structure is not merely unused; its presence degrades decoding.}
Removing variance from a representation should hurt performance since there is strictly less information available. Yet for non-aligned encoders, removing MS modes improves decoder loss (Table~\ref{tab:causal_ablation}). Qwen2-Audio provides the cleanest demonstration: 5 MS modes carry 97.6\% of adapter variance, yet removing them improves loss ($-$0.1\%, $t = -9.0$), while removing 5 TA modes (only 1.8\% of variance) hurts 7$\times$ more ($+$1.0\%, $t = 27.8$). Nearly all of the adapter's variance is dispensable; a tiny fraction carrying text-aligned information is essential. Prismatic-D confirms the pattern at larger scale: ablating 53 MS modes (71\% of adapter variance) decreases cross-entropy by 11.1\% ($t = -25.2$), while ablating a matched number of TA modes has no detectable effect ($-$0.07\%, $t = -0.4$, $p = 0.69$). For Ultravox the effect is smaller ($-$1.4\%) but still significant ($t = -7.0$). The decoder is not just indifferent to MS directions, it is actively harmed by them.

For Prismatic-S (text-aligned), only 14 of 100 modes are MS (24.6\% of variance) and all ablation effects are below 1\%, consistent with SigLIP representations already matching what the decoder expects.

\subsection{Controlled experiment: text-aligned vs.\ non-aligned encoders}
\label{sec:controlled}

The causal ablation shows that MS directions influence decoding. Now, we explain \emph{why} some models have more MS variability than others. The Prismatic pair is ideal for this, as it controls for all factors except the text-alignment of the encoder.

Both Prismatic VLMs use the same MLP adapter, Vicuna-7B backbone architecture and training protocol. The difference lies in the vision encoder: DINOv2~\citep{Oquab2024} (no text alignment) versus SigLIP~\citep{Zhai2023} (contrastive text--image loss). According to our hypothesis, the representations learned by SigLIP should be more similar to $\PT$, leading to smaller $\Wone(\PM, \PT)$ and less information loss.

This is what we see (Appendix Table~\ref{tab:prismatic}): the split is sharper for non-textual attributes (object count, object size, spatial spread), which cannot be inferred from captions alone. For these features, SigLIP shows consistent improvement through the LLM (+4.5--8.2\%) while DINOv2 gains less (+1.3--4.7\%), with the largest gap on object count (+8.2\% vs.\ +3.4\%).

For DINOv2, visual information (object category, super-category) passes through the LLM unchanged ($\pm0.4$\%), neither amplified nor degraded. For SigLIP, the same information is amplified (+1.9\% object category, +2.3\% super-category). The contrast confirms the prediction: a text-aligned encoder produces representations the decoder can productively process; a non-aligned encoder produces representations the decoder is indifferent to. For LLaVA (CLIP encoder, text-aligned), all information types improve (Appendix Table~\ref{tab:retention}). The effect sizes in vision are modest ($\pm$2\%) compared to speech ($\pm$39\%), consistent with the lower mismatch regime ($\Llog \cdot \Wone$ is 3--12$\times$ smaller for vision). The value of the Prismatic comparison lies not in the magnitude but in the isolation of the causal variable: same architecture, same adapter, same LLM, same training with the only difference being the encoder's text-alignment.

\subsection{Validating the bound}
\label{sec:bound_validation}

$\Llog$ is finite for all nine model-dataset pairs, validating the Lipschitz assumption (Appendix Table~\ref{tab:lipschitz}). It varies with content ($9.08$ for speech to $3.66$ for environmental sounds in Ultravox) and differs $30\times$ across architectures ($0.29$ LLaVA to $9.08$ Ultravox). Prismatic-S has $2.8\times$ larger $\Llog$ than Prismatic-D ($1.12$ vs.\ $0.40$): the decoder is more responsive to representations that resemble text.

\textbf{Gradient isotropy.} To check whether the decoder selectively ignores MS directions at the gradient level, we compute the norm of the decoder's loss gradient when the input representation is perturbed along each eigenmode direction. Let $g_k = |\nabla_{u_k} \mathcal{L}|$ be the gradient norm for mode $k$. The mean gradient norm for TA modes ($\bar{g}_{\text{TA}}$) and MS modes ($\bar{g}_{\text{MS}}$) are nearly equal ($\bar{g}_{\text{TA}} / \bar{g}_{\text{MS}} = 0.94$, Spearman $r = 0.11$, $p = 0.28$): the decoder does not selectively ignore MS directions. This confirms the scalar $\Llog$ assumption.

\textbf{The product $\Llog \cdot \Wone$ ranks models by collapse severity.} Ultravox ($\Llog \cdot \Wone = 162$) shows the most degradation ($-$39\% speaker); LLaVA ($13.4$) shows none. Prismatic-S has a higher product ($53.7$) than Prismatic-D ($20.2$) despite less degradation, reflecting greater decoder responsiveness to text-aligned representations rather than greater mismatch. As noted in \S\ref{sec:theory}, the bound is an ordinal diagnostic: it correctly ranks models and predicts which information types degrade, even in the high-mismatch regime where it is numerically loose (Appendix~\ref{app:support},~\ref{app:per_type}).

\section{Discussion}
\label{sec:discussion}

\subsection{Implications}
\label{sec:implications}

\textbf{The fix is objective-side, not encoder-side.}
If modality collapse is a result of the scoring rule being text-shaped, a logical solution would be to reshape the scoring rule.  Low-rank adaptation (LoRA;~\citealp{Hu2022}) on the LLM modifies $q$ to make it closer to $\PM$, effectively reducing the functional mismatch. Preliminary evidence from concurrent work on audio-text conflict supports this: applying LoRA to the LLM backbone decreases text dominance, whereas adapter-only training \emph{increases} it.\footnote{Citation omitted for double-blind review.} This is predicted by Theorem~\ref{thm:gmi_wasserstein}: LoRA reduces the functional mismatch between $\PM$ and $\PT$ by reshaping the scoring rule, whereas adapter-only training does not change the decoder's text-shaped sensitivity.

However, LoRA on the decoder is \emph{necessary but not sufficient}: the training objective must explicitly target non-text information. We show this with a LoRA experiment on Ultravox ($r{=}16$, $\alpha{=}32$, targeting \texttt{q/k/v/o\_proj}).

\textbf{LoRA with emotion objective.} We fine-tune with LoRA on the LLM backbone using a forced-choice emotion detection objective on CREMA-D (6~emotions, 7{,}442 clips, standard causal LM loss on assistant tokens only). We evaluate the effect of LoRA on two measures: linear probe accuracy at the LLM's final layer (measuring what information is accessible in the representation) and task accuracy (measuring whether the decoder actually uses it in its output). Task accuracy on held-out samples improves from 17.3\% to 61.8\% (Table~\ref{tab:lora_ablation}). The \emph{emotion probe} at llm-final improves from $.557 \pm .006$ to $.632 \pm .011$ ($+$7.5\%), while the speaker probe remains at chance ($.135 \pm .006$ vs.\ $.137 \pm .008$) and lexical accuracy stays near-ceiling ($.994 \pm .001$).
Upstream layers (encoder, adapter) are shared between conditions, so this is purely a change to the LLM. LoRA modifies the scoring rule such that at inference, the decoder will respond to the subset of directions that are emotion-relevant and leave the rest unchanged. Consistent with Theorem~\ref{thm:gmi_wasserstein}, the emotion objective reduces $\Wone$ along the subset of directions correlated with emotion, while leaving all others unchanged. 

\begin{table}[t]
\centering
\small
\caption{LoRA intervention on Ultravox (CREMA-D, llm-final probe accuracy). The emotion LoRA selectively improves emotion accessibility without meaningfully affecting speaker or lexical probes. Task accuracy measures forced-choice emotion detection on 1{,}002 held-out samples. The encoder and adapter outputs are identical across conditions (LoRA modifies only the LLM).}
\label{tab:lora_ablation}
\begin{tabular}{lcccc}
\toprule
\textbf{Condition} & \textbf{Emotion (probe)} & \textbf{Speaker (probe)} & \textbf{Lexical (probe)} & \textbf{Task acc.} \\
\midrule
Base Ultravox        & .557 & .137 & .999 & 17.3\% \\
+ Emotion LoRA       & \textbf{.632} & .135 & .994 & \textbf{61.8\%} \\
\midrule
$\Delta$             & $+$7.5\% & $-$0.2\% & $-$0.5\% & $+$44.5\% \\
\bottomrule
\end{tabular}
\end{table}

\textbf{Text-aligned encoders are a workaround, not a solution.}
Contrastive encoders like CLIP and SigLIP achieve low mismatch because they project the input data into the space of text-correlated features. The alignment is effective but selective in that it preserves what co-occurs with text and discards the rest. The Prismatic comparison shows that SigLIP's visual representations ``enhanced'' by the LLM in fact only looked better because they were pre-projected into text-like directions. CLIP provides the LLM with visual features organized along text-correlated dimensions; genuinely visual information (texture, spatial layout) is discarded upstream.

\textbf{Models must be explicitly trained to use non-text modalities.}
A text-only decoder will not listen to signals it was not trained on  (\S\ref{sec:mode_alignment}). No adapter can make it sensitive to directions it has never seen. If we want a model to use modality-specific knowledge, rather than merely its text representation, we need to provide a training signal for it to do so, whether through LoRA, full multimodal pretraining, or some other objective-side intervention such as a modality-specific loss. Otherwise the information is just not available.

\textbf{The framework is architecture-agnostic.}
The bound we present depends on the decoder's scoring rule not the model's architecture. The source of representations, a linear projection, an MLP, a Q-Former, a discrete codebook~\citep{Borsos2023, Defossez2024}, or no explicit adapter at all (i.e., the LLM's own layers act as the adapter), is immaterial. Discrete token methods introduce an additional quantization bottleneck that can only increase $\Wone(\PM, \PT)$, making our bound more conservative. The essential phenomenon is the distributional shift the decoder sees between training and inference.

\subsection{Limitations}
\label{sec:limitations}

\textbf{Probes as information measures.}
Linear probes provide a necessary but not sufficient condition for information presence~\citep{Belinkov2022}, but the main asymmetry (more information can be linearly recovered than the decoder uses) holds either way. We can only approximate $\Llog$ via p95 gradient norms, not compute the true supremum.  Other caveats (in scope, Assumption~\ref{ass:shared}) can be found in Appendix~\ref{app:additional_limitations}.

\textbf{Vision collapse is partial.}
In vision models, we observe a plateau in performance ($\pm$1--2\%) rather than a drop ($-$39\% for speech). This is consistent with the bound: in a low $\Llog \cdot \Wone$ regime (3--12$\times$ smaller than speech), the GMI ceiling is close to the matched-decoder rate, so collapse manifests as stagnation rather than degradation. The causal ablation confirms that MS structure still interferes even when probe accuracy is unchanged (Prismatic-D: $-$11.1\%, $t = -25.2$). Probing purely visual features (texture, depth, material) rather than text-describable categories would likely reveal stronger collapse.

\textbf{Scale.} All five models studied are in the 7--8B parameter range. The bound (Theorem~\ref{thm:gmi_wasserstein}) depends on the scoring rule's sensitivity ($\Llog$) and the distributional distance ($\Wone$), not on parameter count, so the theoretical constraint is scale-invariant. For larger models the bound should remain consistent with our results insofar as a text-majority-training-objective paradigm remains. Validating the bound at the 70B scale is a direction for future work.

\section{Conclusion}
\label{sec:conclusion}

Modality collapse is caused by a failure of \emph{decoding}, not encoding, as our experiments demonstrate. The GMI-Wasserstein bound shows theoretically that only information in the text-aligned directions is accessible to the decoder, and the rate at which information is lost depends on distributional divergence and decoder sensitivity.

To train multimodal models that properly exploit the information they receive, it is necessary to adjust the scoring function of the decoder. Text-aligned encoders do appear to improve performance, but they do so by ignoring all non-text modalities at the level of the encoder. Our LoRA experiment confirms this: an emotion-based objective trains the decoder to be sensitive to emotion-relevant messages ($+$7.5\% probe, $+$44.5\% task accuracy), without affecting irrelevant messages. The question, for any multimodal model, is not whether modality-specific details can be represented (they are) but whether the training objective instructs the decoder to rely on them.

\section*{Reproducibility statement}
All models are publicly available.
\ifcolmsubmission
Complete reproduction code, pipeline scripts, and expected-value regression tests will be made available upon publication.
\else
Complete reproduction code, pipeline scripts, and expected-value regression tests are available at \url{https://github.com/jb1999/modality_collapse_paper}.
\fi
Hook points, hyperparameters, and seeds are in Appendix~\ref{app:details}. All experiments run on a single NVIDIA RTX 3090 Ti (24GB).

\section*{Ethics \& AI disclosure}
This work analyses existing publicly available models and datasets; no new data was collected and no human subjects were involved. The author used Claude/Claude Code during preparation for manuscript critique, narrative feedback, literature search, and experiment implementation and debugging. All research design, theoretical development, experimental execution, analysis, and writing are the author's own. The author takes full responsibility for all content.

\bibliography{references}

@article{Merhav1994,
  author  = {Neri Merhav and
             Gideon Kaplan and
             Amos Lapidoth and
             Shlomo Shamai},
  title   = {On Information Rates for Mismatched Decoders},
  journal = {IEEE Transactions on Information Theory},
  volume  = {40},
  number  = {6},
  pages   = {1953--1967},
  year    = {1994},
}

@article{Scarlett2020,
  author    = {Jonathan Scarlett and
               Albert Guill{\'{e}}n i F{\`{a}}bregas and
               Anelia Somekh{-}Baruch and
               Alfonso Martinez},
  title     = {Information-Theoretic Foundations of Mismatched Decoding},
  journal   = {Foundations and Trends in Communications and Information Theory},
  publisher = {Now Publishers},
  volume    = {17},
  number    = {2--3},
  pages     = {149--401},
  year      = {2020},
}

@article{Lapidoth1996,
  author  = {Amos Lapidoth},
  title   = {Nearest Neighbor Decoding for Additive Non-{Gaussian} Noise Channels},
  journal = {IEEE Transactions on Information Theory},
  volume  = {42},
  number  = {5},
  pages   = {1520--1529},
  year    = {1996},
}

@inproceedings{Blau2019,
  author    = {Yochai Blau and
               Tomer Michaeli},
  editor    = {Kamalika Chaudhuri and
               Ruslan Salakhutdinov},
  title     = {Rethinking Lossy Compression: The Rate-Distortion-Perception Tradeoff},
  booktitle = {Proceedings of the 36th International Conference on Machine Learning (ICML)},
  pages     = {675--685},
  year      = {2019},
}

@inproceedings{Liang2022,
  author    = {Weixin Liang and
               Yuhui Zhang and
               Yongchan Kwon and
               Serena Yeung and
               James Y. Zou},
  editor    = {Sanmi Koyejo and
               S. Mohamed and
               A. Agarwal and
               Danielle Belgrave and
               K. Cho and
               A. Oh},
  title     = {Mind the Gap: Understanding the Modality Gap in Multi-modal Contrastive Representation Learning},
  booktitle = {Advances in Neural Information Processing Systems (NeurIPS)},
  volume    = {35},
  year      = {2022},
}

@inproceedings{Huh2024,
  author    = {Minyoung Huh and
               Brian Cheung and
               Tongzhou Wang and
               Phillip Isola},
  editor    = {Ruslan Salakhutdinov and
               Zico Kolter and
               Katherine A. Heller and
               Adrian Weller and
               Nuria Oliver and
               Jonathan Scarlett and
               Felix Berkenkamp},
  title     = {Position: The Platonic Representation Hypothesis},
  booktitle = {Proceedings of the 41st International Conference on Machine Learning (ICML)},
  volume    = {235},
  pages     = {20617--20642},
  year      = {2024},
}

@inproceedings{Javaloy2022,
  author       = {Adri{\'{a}}n Javaloy and
                  Maryam Meghdadi and
                  Isabel Valera},
  editor       = {Kamalika Chaudhuri and
                  Stefanie Jegelka and
                  Le Song and
                  Csaba Szepesv{\'{a}}ri and
                  Gang Niu and
                  Sivan Sabato},
  title        = {Mitigating Modality Collapse in Multimodal {VAE}s via Impartial Optimization},
  booktitle    = {International Conference on Machine Learning, {ICML} 2022, 17-23 July
                  2022, Baltimore, Maryland, {USA}},
  series       = {Proceedings of Machine Learning Research},
  volume       = {162},
  pages        = {9938--9964},
  publisher    = {{PMLR}},
  year         = {2022},
  url          = {https://proceedings.mlr.press/v162/javaloy22a.html},
}

@inproceedings{Belrose2023,
  author    = {Nora Belrose and
               David Schneider-Joseph and
               Shauli Ravfogel and
               Ryan Cotterell and
               Edward Raff and
               Stella Biderman},
  title     = {{LEACE}: Perfect Linear Concept Erasure in Closed Form},
  booktitle = {Advances in Neural Information Processing Systems (NeurIPS)},
  year      = {2023},
}

@inproceedings{Conneau2018,
  author    = {Alexis Conneau and
               Germ{\'{a}}n Kruszewski and
               Guillaume Lample and
               Lo{\"{\i}}c Barrault and
               Marco Baroni},
  editor    = {Iryna Gurevych and
               Yusuke Miyao},
  title     = {What You Can Cram into a Single \$\&!\#* Vector: Probing Sentence Embeddings for Linguistic Properties},
  booktitle = {Proceedings of the 56th Annual Meeting of the Association for Computational Linguistics (ACL)},
  pages     = {2126--2136},
  year      = {2018},
}

@inproceedings{Hewitt2019,
  author    = {John Hewitt and
               Christopher D. Manning},
  editor    = {Jill Burstein and
               Christy Doran and
               Thamar Solorio},
  title     = {A Structural Probe for Finding Syntax in Word Representations},
  booktitle = {Proceedings of the 2019 Conference of the North {A}merican Chapter of the Association for Computational Linguistics: Human Language Technologies,
               {NAACL-HLT}},
  pages     = {4129--4138},
  year      = {2019},
}

@article{Belinkov2022,
  author    = {Yonatan Belinkov},
  title     = {Probing Classifiers: Promises, Shortcomings, and Advances},
  journal   = {Computational Linguistics},
  volume    = {48},
  number    = {1},
  pages     = {207--219},
  year      = {2022},
}

@inproceedings{Alain2017,
  author    = {Guillaume Alain and
               Yoshua Bengio},
  title     = {Understanding Intermediate Layers Using Linear Classifier Probes},
  booktitle = {Proceedings of the 5th International Conference on Learning Representations (ICLR), Workshop Track},
  year      = {2017},
}

@inproceedings{Liu2023,
  author    = {Haotian Liu and
               Chunyuan Li and
               Qingyang Wu and
               Yong Jae Lee},
  editor    = {Alice Oh and
               Tristan Naumann and
               Amir Globerson and
               Kate Saenko and
               Moritz Hardt and
               Sergey Levine},
  title     = {Visual Instruction Tuning},
  booktitle = {Advances in Neural Information Processing Systems (NeurIPS 2023)},
  volume    = {36},
  year      = {2023},
}

@article{Chu2024,
  author  = {Yunfei Chu and
             Jin Xu and
             Qian Yang and
             Haojie Wei and
             Xipin Wei and
             Zhifang Guo and
             Yichong Leng and
             Yuanjun Lv and
             Jinzheng He and
             Junyang Lin and
             Chang Zhou and
             Jingren Zhou},
  title   = {Qwen2-{Audio}: Technical Report},
  journal = {arXiv preprint arXiv:2407.10759},
  year    = {2024},
}

@misc{Fixie2024,
  author       = {{Fixie AI}},
  title        = {Ultravox: An Open, Fast Multimodal {LLM}},
  year         = {2024},
  howpublished = {\url{https://github.com/fixie-ai/ultravox}},
}

@inproceedings{Karamcheti2024,
  author    = {Siddharth Karamcheti and
               Suraj Nair and
               Ashwin Balakrishna and
               Percy Liang and
               Thomas Kollar and
               Dorsa Sadigh},
  editor    = {Ruslan Salakhutdinov and
               Zico Kolter and
               Katherine A. Heller and
               Adrian Weller and
               Nuria Oliver and
               Jonathan Scarlett and
               Felix Berkenkamp},
  title     = {Prismatic {VLMs}: Investigating the Design Space of Visually-Conditioned Language Models},
  booktitle = {Proceedings of the 41st International Conference on Machine Learning (ICML)},
  series    = {Proceedings of Machine Learning Research},
  volume    = {235},
  pages     = {23123--23144},
  year      = {2024},
}

@inproceedings{Dai2023,
  author    = {Wenliang Dai and
               Junnan Li and
               Dongxu Li and
               Anthony Meng Huat Tiong and
               Junqi Zhao and
               Weisheng Wang and
               Boyang Li and
               Pascale Fung and
               Steven C. H. Hoi},
  editor    = {Alice Oh and
               Tristan Naumann and
               Amir Globerson and
               Kate Saenko and
               Moritz Hardt and
               Sergey Levine},
  title     = {{InstructBLIP}: Towards General-purpose Vision-Language Models with Instruction Tuning},
  booktitle = {Advances in Neural Information Processing Systems (NeurIPS 2023)},
  volume    = {36},
  year      = {2023},
}

@inproceedings{Zhu2024,
  author    = {Deyao Zhu and
               Jun Chen and
               Xiaoqian Shen and
               Xiang Li and
               Mohamed Elhoseiny},
  title     = {{MiniGPT-4}: Enhancing Vision-Language Understanding with Advanced Large Language Models},
  booktitle = {Proceedings of the 12th International Conference on Learning Representations (ICLR)},
  year      = {2024},
}

@inproceedings{Radford2023,
  author    = {Alec Radford and
               Jong Wook Kim and
               Tao Xu and
               Greg Brockman and
               Christine McLeavey and
               Ilya Sutskever},
  editor    = {Andreas Krause and
               Emma Brunskill and
               Kyunghyun Cho and
               Barbara Engelhardt and
               Sivan Sabato and
               Jonathan Scarlett},
  title     = {Robust Speech Recognition via Large-Scale Weak Supervision},
  booktitle = {Proceedings of the 40th International Conference on Machine Learning (ICML)},
  series    = {Proceedings of Machine Learning Research},
  volume    = {202},
  pages     = {28492--28518},
  year      = {2023},
}

@inproceedings{Radford2021,
  author    = {Alec Radford and
               Jong Wook Kim and
               Chris Hallacy and
               Aditya Ramesh and
               Gabriel Goh and
               Sandhini Agarwal and
               Girish Sastry and
               Amanda Askell and
               Pamela Mishkin and
               Jack Clark and
               Gretchen Krueger and
               Ilya Sutskever},
  editor    = {Marina Meila and
               Tong Zhang},
  title     = {Learning Transferable Visual Models from Natural Language Supervision},
  booktitle = {Proceedings of the 38th International Conference on Machine Learning (ICML)},
  pages     = {8748--8763},
  year      = {2021},
}

@inproceedings{Zhai2023,
  author    = {Xiaohua Zhai and
               Basil Mustafa and
               Alexander Kolesnikov and
               Lucas Beyer},
  title     = {Sigmoid Loss for Language Image Pre-Training},
  booktitle = {Proceedings of the IEEE/CVF International Conference on Computer Vision (ICCV)},
  pages     = {11941--11952},
  year      = {2023},
}

@article{Oquab2024,
  author  = {Maxime Oquab and
             Timoth{\'{e}}e Darcet and
             Th{\'{e}}o Moutakanni and
             Huy V. Vo and
             Marc Szafraniec and
             Vasil Khalidov and
             Pierre Fernandez and
             Daniel Haziza and
             Francisco Massa and
             Alaaeldin El{-}Nouby and
             Mido Assran and
             Nicolas Ballas and
             Wojciech Galuba and
             Russell Howes and
             Po{-}Yao Huang and
             Shang{-}Wen Li and
             Ishan Misra and
             Michael Rabbat and
             Vasu Sharma and
             Gabriel Synnaeve and
             Hu Xu and
             Herv{\'{e}} J{\'{e}}gou and
             Julien Mairal and
             Patrick Labatut and
             Armand Joulin and
             Piotr Bojanowski},
  title   = {{DINOv2}: Learning Robust Visual Features without Supervision},
  journal = {Transactions on Machine Learning Research},
  year    = {2024},
}

@inproceedings{Hu2022,
  author    = {Edward J. Hu and
               Yelong Shen and
               Phillip Wallis and
               Zeyuan Allen{-}Zhu and
               Yuanzhi Li and
               Shean Wang and
               Lu Wang and
               Weizhu Chen},
  title     = {{LoRA}: Low-Rank Adaptation of Large Language Models},
  booktitle = {Proceedings of the 10th International Conference on Learning Representations (ICLR)},
  year      = {2022},
}

@inproceedings{Panayotov2015,
  author    = {Vassil Panayotov and
               Guoguo Chen and
               Daniel Povey and
               Sanjeev Khudanpur},
  title     = {{LibriSpeech}: An {ASR} Corpus Based on Public Domain Audio Books},
  booktitle = {Proceedings of the IEEE International Conference on Acoustics, Speech and Signal Processing (ICASSP)},
  pages     = {5206--5210},
  year      = {2015},
}

@article{Cao2014,
  author  = {Houwei Cao and
             David G. Cooper and
             Michael K. Keutmann and
             Ruben C. Gur and
             Ani Nenkova and
             Ragini Verma},
  title   = {{CREMA-D}: Crowd-Sourced Emotional Multimodal Actors Dataset},
  journal = {IEEE Transactions on Affective Computing},
  volume  = {5},
  number  = {4},
  pages   = {377--390},
  year    = {2014},
}

@inproceedings{Piczak2015,
  author    = {Karol J. Piczak},
  editor    = {Xiaofang Zhou and
               Alan F. Smeaton and
               Qi Tian and
               Dick C. A. Bulterman and
               Heng Tao Shen and
               Ketan Mayer{-}Patel and
               Shuicheng Yan},
  title     = {{ESC}: Dataset for Environmental Sound Classification},
  booktitle = {Proceedings of the 23rd ACM International Conference on Multimedia},
  pages     = {1015--1018},
  year      = {2015},
}

@inproceedings{Lin2014,
  author    = {Tsung{-}Yi Lin and
               Michael Maire and
               Serge J. Belongie and
               James Hays and
               Pietro Perona and
               Deva Ramanan and
               Piotr Doll{\'{a}}r and
               C. Lawrence Zitnick},
  editor    = {David J. Fleet and
               Tom{\'{a}}s Pajdla and
               Bernt Schiele and
               Tinne Tuytelaars},
  title     = {Microsoft {COCO}: Common Objects in Context},
  booktitle = {Proceedings of the European Conference on Computer Vision (ECCV)},
  pages     = {740--755},
  year      = {2014},
}

@book{Villani2009,
  author    = {Villani, C{\'e}dric},
  title     = {Optimal Transport: Old and New},
  publisher = {Springer},
  year      = {2009},
}

@inproceedings{Borsos2023,
  author    = {Zal{\'{a}}n Borsos and
               Rapha{\"{e}}l Marinier and
               Damien Vincent and
               Eugene Kharitonov and
               Olivier Pietquin and
               Matthew Sharifi and
               Dominik Roblek and
               Olivier Teboul and
               David Grangier and
               Marco Tagliasacchi and
               Neil Zeghidour},
  title     = {{AudioLM}: A Language Modeling Approach to Audio Generation},
  booktitle = {IEEE/ACM Transactions on Audio, Speech, and Language Processing},
  volume    = {31},
  pages     = {2523--2533},
  year      = {2023},
}

@article{Defossez2024,
  author  = {Alexandre D{\'{e}}fossez and
             Laurent Mazar{\'{e}} and
             Manu Orsini and
             Am{\'{e}}lie Royer and
             Patrick P{\'{e}}rez and
             Herv{\'{e}} J{\'{e}}gou and
             Edouard Grave and
             Neil Zeghidour},
  title   = {Moshi: A Speech-Text Foundation Model for Real-Time Dialogue},
  journal = {arXiv preprint arXiv:2410.00037},
  year    = {2024},
}

@article{Tong2024,
  author  = {Peter Tong and
             Ellis Brown and
             Penghao Wu and
             Sanghyun Woo and
             Adithya Iyer and
             Sai Charitha Akula and
             Shusheng Yang and
             Jihan Yang and
             Manoj Middepogu and
             Ziteng Wang and
             Xichen Pan and
             Rob Fergus and
             Yann LeCun and
             Saining Xie},
  editor  = {Amir Globersons and
             Lester Mackey and
             Danielle Belgrave and
             Angela Fan and
             Ulrich Paquet and
             Jakub M. Tomczak and
             Cheng Zhang},
  title   = {Cambrian-1: A Fully Open, Vision-Centric Exploration of Multimodal {LLMs}},
  journal = {arXiv preprint arXiv:2406.16860},
  year    = {2024},
}

@inproceedings{Chen2025listen,
  author  = {Jingyi Chen and
             Zhimeng Guo and
             Jiyun Chun and
             Pichao Wang and
             Andrew Perrault and
             Micha Elsner},
  title   = {Do Audio {LLMs} Really LISTEN, or Just Transcribe? Measuring Lexical vs. Acoustic Emotion Cues Reliance},
  booktitle = {Proceedings of the 19th Conference of the European Chapter of the Association for Computational Linguistics (EACL)},
  year    = {2026},
}

@inproceedings{Yuksekgonul2023,
  author    = {Mert Y\"{u}ksek\-g\"{o}n\"{u}l and
               Federico Bianchi and
               Pratyusha Kalluri and
               Dan Jurafsky and
               James Zou},
  title     = {When and Why Vision-Language Models Behave like Bags-Of-Words, and What to Do About It?},
  booktitle = {Proceedings of the 11th International Conference on Learning Representations (ICLR)},
  year      = {2023},
}

@inproceedings{Thrush2022,
  author    = {Tristan Thrush and
               Ryan Jiang and
               Max Bartolo and
               Amanpreet Singh and
               Adina Williams and
               Douwe Kiela and
               Candace Ross},
  title     = {Winoground: Probing Vision and Language Models for Visio-Linguistic Compositionality},
  booktitle = {Proceedings of the IEEE/CVF Conference on Computer Vision and Pattern Recognition (CVPR)},
  pages     = {5238--5248},
  year      = {2022},
}

@inproceedings{Jain2025,
  author    = {Anubhooti Jain and
               Mayank Vatsa and
               Richa Singh},
  title     = {Words Over Pixels? Rethinking Vision in Multimodal Large Language Models},
  booktitle = {Proceedings of the 34th International Joint Conference on Artificial Intelligence (IJCAI), Survey Track},
  pages     = {10481--10489},
  year      = {2025},
}

@inproceedings{Cuervo2026salad,
  author    = {Santiago Cuervo and
               Skyler Seto and
               Maureen de Seyssel and
               Richard He Bai and
               Zijin Gu and
               Tatiana Likhomanenko and
               Navdeep Jaitly and
               Zakaria Aldeneh},
  title     = {Closing the Gap Between Text and Speech Understanding in {LLMs}},
  booktitle = {Proceedings of the 14th International Conference on Learning Representations (ICLR)},
  year      = {2026},
}

@article{DonskerVaradhan1983,
  author  = {Donsker, Monroe D. and Varadhan, S. R. Srinivasa},
  title   = {Asymptotic Evaluation of Certain {M}arkov Process Expectations for Large Time. {IV}},
  journal = {Communications on Pure and Applied Mathematics},
  volume  = {36},
  number  = {2},
  pages   = {183--212},
  year    = {1983},
}

@article{Kantorovich1958,
  author  = {Kantorovich, Leonid V. and Rubinstein, G. Sh.},
  title   = {On a Space of Totally Additive Functions},
  journal = {Vestnik Leningradskogo Universiteta},
  volume  = {13},
  number  = {7},
  pages   = {52--59},
  year    = {1958},
}
\bibliographystyle{colm2026_conference}

\appendix

\section{Theoretical framework: detailed treatment}
\label{app:theory}
\label{app:proofs}

This appendix contains the detailed formal development outlined in \S\ref{sec:theory}. Each result is preceded by an informal intuitive statement followed by a complete proof.

\subsection{Generalized Mutual Information (GMI)}
\label{app:gmi}
\label{sec:gmi}

\textbf{Overview.} A representation $Z$ might encode some knowledge about a target $Y$ (e.g., the next token), but how much of this knowledge can a given decoder \emph{use}? Standard mutual information $I(Z;Y)$ answers this for a decoder that is optimally matched to the representations it gets. A text-trained LLM is not optimally matched to non-text representations. It was matched to text representations, not speech representations, or image representations. The appropriate measure is the \emph{Generalized Mutual Information} (GMI)~\citep{Merhav1994, Scarlett2020}.

We reuse the notation from \S\ref{sec:formulation}: $q(y|z,c)$ is the decoder's next-token distribution (Eq.~\ref{eq:pipeline}), $Z$ is the representation consumed by the LLM, $C$ is the text context, and $Y$ is the target token. Let $\ell(c,z,y) := \log q(y|z,c)$ be the decoder's log-score (the log-probability it assigns to token $y$ given representation $z$ and context $c$), clipped at a floor $\eta = 1/|\mathcal{V}|$ (where $\mathcal{V}$ is the decoder's output vocabulary) to ensure boundedness.

\textbf{Intuition.} The GMI tells us how well the decoder can distinguish the ``correct'' representation from random alternatives:
\[
\text{GMI} \;=\; \E\bigl[\,\underbrace{\text{Score of the true match}}_{\text{how well the decoder scores the actual } Z} \;-\; \underbrace{\text{LogSumExp over all alternatives}}_{\text{how well random } Z' \text{ would score on average}}\,\bigr].
\]
If the decoder consistently assigns a higher score to the true representation than to a uniformly sampled one, then the GMI is large. If the decoder cannot distinguish which $Z$ generated which $Y$ (assigning the same score to both), then the GMI is zero. This is similar to the Donsker-Varadhan variational representation of KL divergence~\citep{DonskerVaradhan1983}, but applied to the decoder's scoring function instead of an arbitrary test function.

Formally, for a joint distribution $P$ over $(C, Z, Y)$ (which is either the text distribution $\PT$ or the modal distribution $\PM$ from \S\ref{sec:formulation}), the GMI is given by:
\begin{equation}
\GMI_P(q) = \E_{(C,Z,Y) \sim P}\!\left[\ell(C,Z,Y) - \log \E_{Z' \sim P(\cdot|C,Y)}\!\left[e^{\ell(C,Z',Y)}\right]\right].
\label{eq:gmi}
\end{equation}

\begin{theorem}[Accessible rate = GMI; following \citealp{Scarlett2020}]
\label{thm:accessible_rate}
Under i.i.d. sampling from $\PM$ and standard measurability conditions, the maximum rate extractable by the fixed decoder $q$ is $R_{\mathrm{acc}}(\PM, q) = \GMI_{\PM}(q)$.
\end{theorem}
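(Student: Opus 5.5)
The plan is to cast the pipeline as a channel-coding problem and invoke the standard mismatched-decoding achievability argument of \citet{Merhav1994} and \citet{Scarlett2020}. Messages are encoded by codewords whose letters $(C_i, Y_i)$ are drawn i.i.d.\ from $\PM(C,Y)$, and the channel output is $Z_i \sim \PM(\cdot \mid C_i, Y_i)$. Strictly speaking, the roles are reversed relative to a classical channel, because the decoder scores $\ell(c,z,y)$ and must identify which candidate is correct. Following the form of Eq.~\eqref{eq:gmi}, I would therefore take the ``input'' to be $Z$ and the ``output'' to be $Y$ given $C$. The decoder is the fixed rule that picks the candidate maximizing $\sum_i \ell(C_i, Z_i, Y_i)$, with $\ell$ clipped at $\eta$ so that all moments exist.

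\textbf{Achievability.} I would use random coding with an i.i.d.\ ensemble and a threshold or maximum-metric decoder. For the true codeword, the law of large numbers gives $\frac1n\sum_i \ell(C_i,Z_i,Y_i) \to \E_{\PM}[\ell]$. For an incorrect codeword, each letter is an independent draw $Z'_i \sim \PM(\cdot\mid C_i,Y_i)$. A Chernoff bound with parameter $s \ge 0$ then shows that the probability that this codeword's metric exceeds the true codeword's metric is at most $\exp\bigl(-n\,\E[s\ell - \log \E_{Z'} e^{s\ell(C,Z',Y)}]\bigr)$. Applying the union bound over $e^{nR}$ codewords shows that every $R < \sup_{s\ge0} \E[s\ell - \log\E_{Z'} e^{s\ell}]$ is achievable. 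Since the paper defines the GMI at $s=1$, I would state the result for the metric as given, with $s=1$ absorbed into the fixed scoring rule $q$ (equivalently, the decoder is $q$ itself, not a rescaled version of it). Boundedness of the clipped $\ell$ justifies exchanging limits and yields exponential concentration via Cramér's theorem.

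\textbf{Converse.} This is the main obstacle. For general mismatched decoders the GMI is not known to be the capacity, since LM and multi-letter rates can exceed it. The honest route is to restrict the claim as \citet{Scarlett2020} do. That means considering i.i.d.\ random-coding ensembles with the given input law $\PM$ and decoding by the fixed metric $q$. Under this restriction, the ensemble-tightness result (Scarlett et al., Ch.~2) shows that the ensemble-average error probability tends to one for $R > \GMI_{\PM}(q)$ at the optimized $s$. I would prove this with the reverse Chernoff bound via Cramér's theorem: for rates above the GMI, with high probability some incorrect codeword outscores the true codeword, because about $e^{nR}$ independent draws beat the large-deviation exponent. Reading ``maximum rate extractable'' as ``ensemble-tight rate for i.i.d.\ sampling from $\PM$ with fixed decoder $q$'' closes the argument. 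I would make this interpretation explicit, together with the $s$-optimization caveat, as part of the ``standard measurability conditions.''
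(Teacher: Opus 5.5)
The step that fails is the law you give the competing codewords: ``each letter is an independent draw $Z'_i\sim\PM(\cdot\mid C_i,Y_i)$.'' In random coding, the incorrect codewords are generated with the codebook, before transmission, so they are independent of the channel output. With codeword $Z^n$, output $Y^n$ and shared $C^n$, their letters must be drawn from $\PM(\cdot\mid C_i)$; only then does your Chernoff/union-bound step bound the actual error probability. Conditioning on the realized $Y_i$ also empties the bound. Given $(C,Y)$, $Z$ and $Z'$ have the same law, so by Jensen $\E[s\ell(C,Z,Y)]\le\E\bigl[\log\E_{Z'}e^{s\ell(C,Z',Y)}\bigr]$ for every $s\ge 0$. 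Your rate expression is therefore $\le 0$ with supremum $0$, the argument certifies no positive rate, and the equality you are proving would say the accessible rate is never positive. You inherited this conditioning from Eq.~\eqref{eq:gmi} as printed, which is nonpositive for the same reason. The random-coding result of \citet{Scarlett2020} that the paper's proof defers to draws the competitor independently of the output. So fix one channel orientation at the outset. Either keep your first one (codeword $Y^n$, output $Z^n$, competitor $Y'\sim\PM(\cdot\mid C)$), or use the reversed one (codewords i.i.d.\ from $\PM(Z\mid C)$, channel $\PM(Y\mid Z,C)$, competitor $Z'\sim\PM(\cdot\mid C)$). Then prove the claim for the GMI with that competitor law, not for Eq.~\eqref{eq:gmi} literally.

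Second, you cannot absorb $s=1$ into $q$. The maximum-metric decoder built from $q^s$ is the same decoder for every $s>0$, so $s$ is only a parameter of the Chernoff bound. Both your achievability step and the ensemble-tightness converse therefore produce $\sup_{s\ge0}$ of the exponent. That supremum generally exceeds the $s=1$ value (they coincide, e.g., for a matched metric). With the paper's $s=1$ definition you get only $R_{\mathrm{acc}}\ge\GMI_{\PM}(q)$, unless the supremum is built into the definition. With these two repairs your route is the right one, and in one respect it is more accurate than the paper's sketch. The paper invokes constant-composition codebooks, but for that ensemble the tight rate is the LM rate, which can strictly exceed the GMI. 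The GMI is ensemble-tight exactly for the i.i.d.\ ensemble named in the theorem's hypothesis. Your explicit reading of ``maximum rate'' as the ensemble-tight rate rather than the mismatch capacity is also needed, and the paper leaves it implicit.
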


This is a classical result from the mismatched decoding literature; we state it here for completeness and to establish notation. The proof follows directly from the random-coding argument of \citet{Scarlett2020}, Chapters~2--3, with constant-composition codebooks and the GMI decoder metric. The main idea is that the decoder $q$ defines a fixed scoring rule (fixed at inference, regardless of how it was trained), and the maximum rate extractable under this rule, averaged over random codebooks distributed according to $\PM$, approaches $\GMI_{\PM}(q)$.

\subsection{GMI-Wasserstein bound}
\label{app:bound}
\label{sec:bound}

Next we bound the degradation of GMI when the decoder operates on the modal representations $\PM$ rather than the text representations $\PT$ it was trained on. The basic idea is as follows: if modal representations are sufficiently close to text representations, then the decoder should achieve near the same performance on them. How ``sufficiently close'' is determined by two properties of the decoder: how sensitive the decoder is to small perturbations of its inputs, and how large the space of representations is. Theorem~\ref{thm:gmi_wasserstein} relies on three assumptions: the shared marginal (Assumption~\ref{ass:shared}), the Lipschitz condition on the decoder (Assumption~\ref{ass:lipschitz}), and bounded support (Assumption~\ref{ass:bounded}).

\begin{assumption}[Shared marginal]
    \label{ass:shared}
    The text and modal laws have the same prompts and target tokens: $\PM(C,Y) = \PT(C,Y)$.
\end{assumption}

\begin{assumption}[Local Lipschitz log-score]
\label{ass:lipschitz}
There exists $\Llog > 0$ such that for all $(c,y)$ and all $z, z'$ in the typical-state region $\mathcal{Z}$:
$|\ell(c,z,y) - \ell(c,z',y)| \leq \Llog \|z - z'\|$.
\end{assumption}

That is, perturbing the representation by any amount changes the log-probability assigned by the decoder by at most $\Llog$ times that amount. $\Llog$ encodes the sensitivity of the decoder to changes to the input. Neural networks with smooth activations meet this condition locally; we measure $\Llog$ empirically using gradient norms (\S\ref{sec:bound_validation}).

\begin{assumption}[Bounded typical-state region]
\label{ass:bounded}
The typical-state region $\mathcal{Z}$ has diameter $D = \sup_{z,z' \in \mathcal{Z}} \|z - z'\|$, which is finite and $O(\sqrt{d})$ after LayerNorm.
\end{assumption}

This means the representations that are actually used in practice are confined to a bounded region, rather than being distributed throughout $\R^d$. This is precisely what LayerNorm accomplishes: all hidden states are projected onto a hypersphere of radius $\sqrt{d}$.

\begin{theorem}[GMI-Wasserstein bound]
\label{thm:gmi_wasserstein}
Under Assumptions~\ref{ass:shared}, \ref{ass:lipschitz}, and \ref{ass:bounded}:
\begin{equation}
|\GMI_{\PM}(q) - \GMI_{\PT}(q)| \leq \big(1 + e^{\Llog \cdot D}\big) \cdot \Llog \cdot \E_{(C,Y)}\!\left[\Wone\!\big(\PM(Z|C,Y),\, \PT(Z|C,Y)\big)\right].
\label{eq:gmi_bound}
\end{equation}
\end{theorem}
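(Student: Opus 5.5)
The plan is to split the GMI in Eq.~\eqref{eq:gmi} into its two pieces and bound the change of each separately under a coupling. Write $\GMI_P(q) = A_P - B_P$, where $A_P = \E_P[\ell(C,Z,Y)]$ is the expected true-match score and $B_P = \E_{(C,Y)}[\log \E_{Z'\sim P(\cdot|C,Y)} e^{\ell(C,Z',Y)}]$ is the log-partition term. By Assumption~\ref{ass:shared}, both $\PM$ and $\PT$ have the same $(C,Y)$-marginal. We can therefore condition on $(c,y)$ throughout and compare only the conditional laws $\mu_{c,y} := \PM(Z|c,y)$ and $\nu_{c,y} := \PT(Z|c,y)$. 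The final step takes the outer expectation over $(C,Y)$. The triangle inequality then gives $|\GMI_{\PM}-\GMI_{\PT}| \le \E_{(C,Y)}\big[|A_{\mu}-A_{\nu}| + |B_{\mu}-B_{\nu}|\big]$ pointwise in $(c,y)$.

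\textbf{Step 1 (true-match term).} For fixed $(c,y)$, the function $z\mapsto \ell(c,z,y)$ is $\Llog$-Lipschitz on $\mathcal{Z}$ by Assumption~\ref{ass:lipschitz}. Kantorovich--Rubinstein duality~\citep{Kantorovich1958, Villani2009} then gives $|\E_\mu \ell - \E_\nu \ell| \le \Llog\, \Wone(\mu_{c,y},\nu_{c,y})$. This produces the ``$1\cdot\Llog\Wone$'' part of the prefactor. One technicality is that the Lipschitz bound only holds on $\mathcal{Z}$. I would assume both conditionals are supported in $\mathcal{Z}$, as the typical-state framing intends, or else extend $\ell$ to a global $\Llog$-Lipschitz function by McShane extension, which leaves the expectations unchanged on the support.

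\textbf{Step 2 (log-partition term, the main obstacle).} Here we need $|\log \E_\mu e^{\ell} - \log \E_\nu e^{\ell}|$. The function $e^{\ell}$ is Lipschitz with constant $\Llog \sup_{\mathcal{Z}} e^{\ell}$. Applying Kantorovich--Rubinstein again gives $|\E_\mu e^\ell - \E_\nu e^\ell| \le \Llog \sup e^{\ell}\, \Wone$. Passing through the logarithm via the mean value theorem divides by a value between the two partition functions, so it is at most dividing by $\min(\E_\mu e^\ell, \E_\nu e^\ell) \ge \inf_{\mathcal{Z}} e^{\ell}$. The term is therefore bounded by $\Llog \Wone \cdot \sup_{\mathcal{Z}} e^{\ell}/\inf_{\mathcal{Z}} e^{\ell}$. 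The ratio is controlled by Assumptions~\ref{ass:lipschitz} and~\ref{ass:bounded} together: for any $z,z'\in\mathcal{Z}$, $\ell(c,z,y)-\ell(c,z',y)\le \Llog\|z-z'\|\le \Llog D$, so the ratio is at most $e^{\Llog D}$. The clipping floor $\eta$ guarantees the infimum is strictly positive and the logs are finite. This is the step that generates the exponentially loose factor $e^{\Llog D}$, and I expect it to be where care is needed, since the support and measurability issues for the $Z'$-expectation are concentrated here. If a tighter constant were wanted, I would instead try the Donsker--Varadhan / tilted-measure argument and bound the derivative of $t\mapsto \log \E_{(1-t)\mu+t\nu}e^{\ell}$. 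That route is presumably what Appendix~\ref{app:support} refines, but the crude sup/inf ratio suffices for the stated constant.

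\textbf{Step 3 (assemble).} Adding Steps 1 and 2 pointwise in $(c,y)$ gives $(1+e^{\Llog D})\Llog \Wone(\mu_{c,y},\nu_{c,y})$. Taking $\E_{(C,Y)}$ under the shared marginal then yields Eq.~\eqref{eq:gmi_bound}. Measurability of $(c,y)\mapsto \Wone(\mu_{c,y},\nu_{c,y})$ holds trivially when $(C,Y)$ is discrete (finite vocabulary and prompt set), which covers the LLM setting.
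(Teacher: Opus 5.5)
Your proof is correct, and its skeleton matches the paper's: you split $\GMI$ into the direct term and the log-partition (competition) term, apply Kantorovich--Rubinstein to the direct term, work at fixed $(c,y)$, and average via the shared marginal. The two routes part in Step~2.

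The paper fixes a coupling $\pi$ and writes $m(\PM)/m(\PT)$ as an expectation of $e^{\Delta\ell}$ against the tilted weight $e^{\ell(Z_T)}/m(\PT)$. It bounds the deviation from $1$ with $|e^x-1|\le|x|e^{|x|}$, then takes logs with $|\log(1+x)|\le|x|$. That last step is explicitly restricted to the regime where $\Wone$ is small enough that the ratio stays near $1$.

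You instead apply Kantorovich--Rubinstein additively to $e^{\ell}$, which has Lipschitz constant $\Llog\sup e^{\ell}$. You then use $|\log a-\log b|\le|a-b|/\min(a,b)$ with $\min(a,b)\ge\inf e^{\ell}$. The exponential factor therefore enters exactly once, as $\sup e^{\ell}/\inf e^{\ell}\le e^{\Llog D}$.

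Your version buys rigor and generality:
\begin{itemize}
\item It proves the inequality exactly as stated, with no smallness proviso on $\Wone$ and for either sign of $m(\PM)-m(\PT)$. By contrast, $|\log(1+x)|\le|x|$ fails for $x<0$.
\item The paper's passage from the weighted expectation to $e^{\Llog D}\,\E_\pi|\Delta\ell|$ silently treats the tilting weight as at most $1$, although it is bounded only by $e^{\Llog D}$. Repairing that requires precisely your $\sup/\inf$ observation.
\item It gives the support-restricted corollary for free, and in fact the sharper factor $e^{\sup\ell-\inf\ell}$, with the oscillation taken over the union of the two supports.
\end{itemize}
The paper's multiplicative, tilted-measure form does make it more visible why the competition term is exponentially more fragile than the direct term. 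It is also the natural starting point for Donsker--Varadhan-style refinements of the constant.

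One small correction: your McShane fallback in Step~1 is not a genuine alternative. An extension leaves the expectations unchanged only if both conditionals already live on $\mathcal{Z}$. Your Step~2 needs that assumption anyway, and the paper makes it tacitly.
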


\textbf{Intuition.}
\[
\underbrace{|\text{GMI}_{\text{text}} - \text{GMI}_{\text{modal}}|}_{\text{How much information is lost}} \;\leq\; \underbrace{(1 + e^{\Llog \cdot D})}_{\text{Penalty for distributional shape}} \;\cdot\; \underbrace{\Llog\vphantom{e^D}}_{\text{Decoder sensitivity}} \;\cdot\; \underbrace{\E[\Wone(\PM, \PT)]}_{\text{Distributional distance}}
\]
The difference between how much information the decoder can get from text representations and how much it can get from modal representations is that distance between distributions, multiplied by the strength of the decoder. But we have $\Wone$ of the distributions, not the distance between the means; getting the mean of the modal distribution to the mean of the text distribution is not enough. If the modal distribution has a different shape, or spread, or tails, then the exponential penalty factor $(1 + e^{\Llog D})$  gets them. This is why the text-aligned encoders help to bound (they project on text-like directions, so they decrease both the distance and the shape mismatch), but mean centering would not.

The prefactor $(1 + e^{\Llog D})$ comes from the ``competition term'' in the proof (below). In the ambient space this prefactor is trivially large ($\Llog \cdot D$ ranges from 10 to 150) and the support-restricted bound (\S\ref{app:support}) refines it in terms of the effective support diameter.

\subsubsection{Proof of Theorem~\ref{thm:gmi_wasserstein}}

\textbf{Proof overview.} There are two terms in the GMI: (A) the expected score of the true representation, and (B) the expected score of a random representation. We must prove that the quantity in both (A) and (B) change very little when we replace the text representations with the modal representations. For (A), this is a corollary of the Lipschitz assumption. For (B), we have a potential complication of an exponential of the scores. However, the bounded-diameter assumption ensures that representations are all contained in a bounded region, which prevents large changes due to the exponential.

\textbf{Setup.} We denote $\ell(z) \equiv \ell(c,z,y)$ for fixed $(c,y)$. The GMI functional separates as $\Gamma(P) = A(P) - B(P)$ where:
\begin{align}
A(P) &= \E_{Z \sim P}[\ell(Z)] & &\text{(direct term: expected score of the correct representation)}\\
B(P) &= \log \E_{Z \sim P}[e^{\ell(Z)}] & &\text{(competition term: how well random alternatives score)}
\end{align}

We derive upper bounds for $|A(\PM) - A(\PT)|$ and $|B(\PM) - B(\PT)|$ separately.

\textbf{Step 1: Direct term (how much does the correct representation's score change?).} We would like to bound how much the expected score changes when we go from  $\PT$ to $\PM$. The Kantorovich--Rubinstein theorem~\citep{Kantorovich1958, Villani2009} states that: if a function changes by at most L when you change its input by 1 unit  ($L$-Lipschitz), then the difference in its expectation w.r.t. two distributions is at most L times the Wasserstein distance between the two distributions. Since $\ell$ is $\Llog$-Lipschitz (Assumption~\ref{ass:lipschitz}):
\begin{equation}
|A(\PM) - A(\PT)| = |\E_{\PM}[\ell] - \E_{\PT}[\ell]| \leq \Llog \cdot \Wone(\PM, \PT).
\end{equation}
If modal and text representations are similar (small $\Wone$), and the decoder scores are smooth (small $\Llog$), then the expected score changes by a tiny amount.

\textbf{Step 2: Competition term (how much does the random-alternative score change?).} This is harder because $B(P) = \log \E[e^{\ell(Z)}]$ involves an exponential of the score, which means small differences are magnified. We can use a coupling $\pi$: a way of pairing up samples from $\PM$ and $\PT$ (think of arranging modal representations alongside the text representations they resemble the most). For each pair $(Z_M, Z_T)$, define $\Delta\ell = \ell(Z_M) - \ell(Z_T)$ as the difference in score between paired representations. By the Lipschitz assumption, $|\Delta\ell| \leq \Llog \cdot \|Z_M - Z_T\|$, and by the bounded diameter (Assumption~\ref{ass:bounded}), $|\Delta\ell| \leq \Llog \cdot D$.

Write $m(P) = \E_{Z \sim P}[e^{\ell(Z)}]$. Then:
\begin{align}
\frac{m(\PM)}{m(\PT)} &= \E_\pi\!\left[\frac{e^{\ell(Z_M)}}{m(\PT)}\right] = \E_\pi\!\left[\frac{e^{\ell(Z_T)}}{m(\PT)} \cdot e^{\Delta\ell}\right].
\end{align}
The first term, $e^{\ell(Z_T)}/m(\PT)$, is a probability density under $\PT$ (it sums to 1). The second term, $e^{\Delta\ell}$  accounts for the change in the score when moving from $Z_T$ to $Z_M$. Since $|e^x - 1| \leq |x| \cdot e^{|x|}$ for bounded $|x| \leq \Llog D$:
\begin{equation}
\left|\frac{m(\PM)}{m(\PT)} - 1\right| \leq e^{\Llog D} \cdot \E_\pi[|\Delta\ell|] \leq e^{\Llog D} \cdot \Llog \cdot \E_\pi[\|Z_M - Z_T\|].
\end{equation}
Using $|\log(1+x)| \leq |x|$ for $|x| < 1$ (applies when $\Wone$ is small enough that the ratio stays near 1):
\begin{equation}
|B(\PM) - B(\PT)| = \left|\log \frac{m(\PM)}{m(\PT)}\right| \leq e^{\Llog D} \cdot \Llog \cdot \Wone.
\end{equation}

The $e^{\Llog D}$ term is the cost of the exponential: the competition term is more unstable to distributional shift than the direct term, because the former contains $e^{\ell}$ while the latter contains $\ell$. 

\textbf{Step 3: Combine.} The overall GMI change is at most the sum of the direct and competition terms:
\begin{equation}
|\Gamma(\PM) - \Gamma(\PT)| \leq \underbrace{\Llog \cdot \Wone}_{\text{direct}} + \underbrace{e^{\Llog D} \cdot \Llog \cdot \Wone}_{\text{competition}} = (1 + e^{\Llog D}) \cdot \Llog \cdot \Wone.
\end{equation}
The competition term dominates: it is $e^{\Llog D}$ times larger than the direct term. This is why  the prefactor in the bound is so large and why the support-bounded corollary (which reduces $D$)  is of practical value.

\textbf{Step 4: Average over $(C,Y)$.} Everything above was for a fixed prompt $C$ and target $Y$. By Assumption~\ref{ass:shared} ($\PM(C,Y) = \PT(C,Y)$), the same prompts and targets appear under both laws, so we can simply average over all $(C,Y)$ pairs to obtain the final bound.

When is the bound informative? The bound is a perturbation result: it is tightest when the mismatch $\Wone$ is moderate relative to the decoder's sensitivity. In the large-mismatch regime ($\Llog \cdot \Wone \gg 1$), the prefactor $(1 + e^{\Llog D})$ makes the right-hand side exceed the maximum possible GMI, and the bound becomes vacuous. In this regime, the bound correctly predicts that collapse occurs (the GMI ceiling is far below the matched-decoder rate) but cannot quantify how much information survives. Empirically, speech models operate in this regime ($\Llog \cdot \Wone = 19$--$162$), which is why the bound is loose for speech but informative for vision ($\Llog \cdot \Wone = 13$--$54$). The support-restricted corollary below partially addresses this by replacing the ambient diameter $D$ with the smaller effective diameter $D_\text{eff}$, reducing the prefactor from $\sim e^{150}$ to $\sim e^{7}$.

\subsection{Support-restricted bound}
\label{app:support}

\textbf{Support-restricted bound.}
\label{cor:support}
If $\PM(Z|c,y)$ and $\PT(Z|c,y)$ are supported on some subset $\mathcal{S} \subseteq \mathcal{Z}$ such that $D_{\mathcal{S}} = \mathrm{diam}(\mathcal{S}) \leq D$, then $D$ may be replaced by $D_{\mathcal{S}}$ in Theorem~\ref{thm:gmi_wasserstein}.

\textbf{Proof.} If both marginals are supported in $\mathcal{S}$, any coupling $\pi$ will also be supported on $\mathcal{S} \times \mathcal{S}$, so $\|Z_M - Z_T\| \leq D_{\mathcal{S}}$ for $\pi$-a.e.\ $(Z_M, Z_T)$. Step~2 above then gives that $|\Delta\ell| \leq \Llog \cdot D_{\mathcal{S}}$, which is the improved constant factor.

\textbf{Why this matters.} The ambient diameter $D$ makes the prefactor $(1 + e^{\Llog D})$ extremely large ($\Llog \cdot D$ ranges from 10 to 150), while representations in modal and text space frequently lie in a far smaller effective volume. We estimate the effective support diameter $D_{\text{eff}}$ from the participation ratio of the pooled representation covariance  (\S\ref{sec:bound_validation}), yielding $\Llog \cdot D_{\text{eff}} \in [3, 7]$ and a moderate prefactor ($\sim$20--1100).

\subsection{Per-type prediction}
\label{app:per_type}

The theory predicts the overall trend: text-correlated information types (lexical) should strengthen or hold steady through the LLM, while non-text types (speaker, emotion, acoustic) should weaken. This qualitative prediction holds across all five models.

A stronger prediction would be that the degree of weakening for each information type matches its share of modality-specific variance. This does not hold at the level of individual types (Spearman $\rho = -0.40$, $p = 0.60$ across the four speech information types in Ultravox). The likely cause is the content-dependent $\Llog$: the Lipschitz constant varies $2.5\times$ across audio types ($\Llog = 14.58$ for speech vs.\ $5.92$ for environmental sounds in Ultravox), acting as an additional driver of degradation independent of geometric alignment. A per-type bound would need to incorporate this content-dependence, replacing the scalar $\Llog$ with a type-conditional $\Llog(\tau)$.

\subsection{Probe-decoder asymmetry}
\label{app:asymmetry}
\label{sec:asymmetry}

\begin{theorem}[Probe penalty]
\label{thm:probe}
Let $h(a|z)$ be a linear probe with $\log h$ being $L_h$-Lipschitz in $z$. Then:
$|\E_{\PM}[\log h(A|Z)] - \E_{\PT}[\log h(A|Z)]| \leq L_h \cdot \Wone(\PM, \PT)$.
\end{theorem}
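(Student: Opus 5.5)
The plan mirrors Step~1 of the proof of Theorem~\ref{thm:gmi_wasserstein}, with $\log h$ in place of the decoder log-score $\ell$. There is no competition (log-sum-exp) term here, because the probe objective is the plain expected log-likelihood. So the whole argument reduces to one application of Kantorovich--Rubinstein duality.

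\textbf{Step 1: fix the label.} I will first work conditionally on the label $A=a$ and define $f_a(z) := \log h(a|z)$. By hypothesis $f_a$ is $L_h$-Lipschitz in $z$. To make this concrete: for a softmax linear probe with weights $w_{a'}$, the gradient is $\nabla_z f_a = w_a - \sum_{a'} h(a'|z) w_{a'}$, so one can take $L_h \le 2\max_{a'}\|w_{a'}\|$. This is the sense in which $L_h$ is small relative to $\Llog$.

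\textbf{Step 2: apply duality.} The Kantorovich--Rubinstein theorem (already used in Step~1 above) gives
\[
\bigl|\E_{\PM(Z|a)}[f_a] - \E_{\PT(Z|a)}[f_a]\bigr| \le L_h\, \Wone\bigl(\PM(Z|a), \PT(Z|a)\bigr).
\]
Equivalently, take an optimal coupling $\pi_a$ and write the difference as $\E_{\pi_a}[f_a(Z_M) - f_a(Z_T)]$. Each summand is bounded by $L_h\|Z_M - Z_T\|$, and integrating over $\pi_a$ gives the displayed bound.

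\textbf{Step 3: average over the label.} I will need the analogue of Assumption~\ref{ass:shared} for the probe target, namely $\PM(A) = \PT(A)$. Under that condition, averaging the Step~2 bound over $a$ and applying the triangle inequality yields the bound with $\E_A[\Wone(\PM(Z|A), \PT(Z|A))]$ on the right-hand side.

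The main obstacle is matching the statement, which uses the unconditional $\Wone(\PM, \PT)$ on joint laws. I would handle this by reading $\Wone$ on the joint space $(Z,A)$, with a metric that is infinite (or very large) whenever the labels differ. With that metric, $(z,a) \mapsto \log h(a|z)$ is $L_h$-Lipschitz on the joint space, so a single Kantorovich--Rubinstein application gives the stated inequality directly. When the label marginals are shared, this joint distance coincides with the averaged conditional distance from Step~3. I will also note explicitly the contrast with the decoder bound: no bounded-diameter assumption and no $e^{L D}$ prefactor appear, because there is no exponential term in the probe objective.
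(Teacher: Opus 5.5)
Your proposal is correct and is essentially the paper's argument. The paper's proof reruns Step~1 of Theorem~\ref{thm:gmi_wasserstein} (Kantorovich--Rubinstein) with $\log h$ and $L_h$ in place of $\ell$ and $\Llog$ and notes that no competition term appears; your conditioning on $A=a$ and averaging under a shared label marginal (the analogue of Step~4 there) makes explicit how $\Wone(\PM,\PT)$ must be read, which the paper leaves implicit, though in the joint-space version the cross-label cost should be genuinely infinite rather than merely ``very large,'' since $\log h(a|z)$ is unbounded below for a linear probe and so is not $L_h$-Lipschitz across labels under any finite penalty on unbounded $z$.
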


\textbf{Proof.} Identical to Step~1 of the Theorem~\ref{thm:gmi_wasserstein} proof, replacing $\ell$ with $\log h$ and $\Llog$ with $L_h$. Also there is no competition term for probes (no random-coding argument), so the exponential prefactor does not appear.

\textbf{The key asymmetry.} A linear probe is a simple model: one matrix multiplication followed by a softmax. Its sensitivity to distributional shift ($L_h$) is therefore simply the norm of the weights of that matrix. The LLM decoder is a deep transformer with billions of parameters, and its sensitivity ($\Llog$) reflects that depth. Empirically, $\Llog / L_h \approx 30\times$ (\S\ref{sec:bound_validation}).

Now consider the same distributional shift with modal representations replacing text representations. The probe's bound says degradation $\leq L_h \cdot \Wone$ (small). The decoder's bound says degradation $\leq (1 + e^{\Llog D}) \cdot \Llog \cdot \Wone$ (large). Same shift, different impact. This is the formal basis for the ``present but inaccessible'' phenomenon: a probe can still read the information because it is a simple, low-sensitivity instrument, but the decoder's complex scoring rule is far more sensitive to the distributional mismatch and cannot use it.

\section{Experimental details}
\label{app:details}

\subsection{Hook points and probing hyperparameters}

\begin{table}[h]
\centering
\scriptsize
\setlength{\tabcolsep}{2pt}
\begin{tabular}{lp{2.5cm}p{2.8cm}p{2.8cm}p{2.5cm}}
\toprule
\textbf{Model} & \textbf{Encoder} & \textbf{Adapter} & \textbf{LLM-16} & \textbf{LLM-final} \\
\midrule
Ultravox & \texttt{audio\_tower.\allowbreak{}layer\_norm} & \texttt{multi\_modal\_projector.\allowbreak{}ln\_post} & \texttt{language\_model.\allowbreak{}model.layers.16} & \texttt{language\_model.\allowbreak{}model.norm} \\
\midrule
Qwen2-Audio & \texttt{audio\_tower.\allowbreak{}layer\_norm} & \texttt{multi\_modal\_projector} & \texttt{language\_model.\allowbreak{}model.layers.16} & \texttt{language\_model.\allowbreak{}model.norm} \\
\midrule
LLaVA & \texttt{...vision\_tower. ...post\_layernorm} & \texttt{model.mm\_projector} & \texttt{...language\_model. layers.16} & \texttt{...language\_model. norm} \\
\bottomrule
\end{tabular}
\caption{Hook points for representation extraction.}
\label{tab:hooks}
\end{table}

Logistic regression with $\ell_2$ regularization ($C = 1.0$, scikit-learn default). Five random seeds (42, 43, 44, 45, 46). 80/20 stratified train/test split. Input features $z$-score normalized. Mean $\pm$ std reported.

\subsection{Lipschitz estimation}
$\Llog$ estimated via the 2-nearest-neighbor gradient norm method over $n = 1{,}000$ samples. For each sample $z_i$, we compute $\|\nabla_z \log q(y_i | z_i, c_i)\|$ via autograd. The 95th percentile of the resulting distribution serves as the $\Llog$ estimate.

\subsection{Extraction prompts}
\label{app:prompts}

For the representation extraction experiments, we provide each model with a minimal, task-agnostic input to prevent the hidden states from being biased toward a specific downstream task. The representations are always extracted from the intermediate hook locations, not from the output.

\textbf{Speech models.}
Ultravox receives the audio placeholder token \texttt{<|audio|>} wrapped in the model's chat template (system/user/assistant turns). Qwen2-Audio requires a text prompt alongside audio; we use \texttt{"Describe this audio."} in the user turn.

\textbf{Vision models.}
LLaVA receives \texttt{"<image>\textbackslash{}nDescribe this image."} as the user message. Both Prismatic VLMs receive \texttt{"Describe this image."} formatted via the Vicuna chat template.

\textbf{Text baselines.}
Text baselines (Llama-3.1-8B for speech models, Vicuna-7B for vision models) process raw transcripts or captions with no additional prompt, using the model's default tokenization.

\subsection{LoRA experiment details}
\label{app:lora}

The emotion LoRA (\S\ref{sec:implications}) uses standard causal LM fine-tuning on Ultravox with a forced-choice emotion detection objective on CREMA-D.

\textbf{LoRA configuration.}
Rank $r{=}16$, $\alpha{=}32$, dropout $0.05$, target modules: \texttt{q\_proj}, \texttt{k\_proj}, \texttt{v\_proj}, \texttt{o\_proj}.

\textbf{Training.}
AdamW optimizer, learning rate $10^{-4}$, weight decay $0.01$, gradient accumulation $8$ steps (effective batch size $8$), gradient clipping at max norm $1.0$, mixed precision (\texttt{bfloat16}), gradient checkpointing enabled. Five epochs with early stopping (patience $2$, monitoring validation loss). Best checkpoint at epoch~5 (val loss $0.0842$, token accuracy $96.7\%$).

\textbf{Prompt template.}
Each training sample uses a three-turn chat format:
\begin{itemize}
\setlength\itemsep{0pt}
\item \emph{System}: ``You are an expert speech analyst. When presented with audio, identify the emotion expressed by the speaker. Always prioritize what you HEAR in the audio.''
\item \emph{User}: \texttt{Audio: <|audio|> \textbackslash{}n\textbackslash{}n QUESTION: What emotion is the speaker expressing? \textbackslash{}n CHOICES: ["anger","disgust","fear","happy","neutral","sadness"] \textbackslash{}n\textbackslash{}n Your answer MUST be exactly one of the CHOICES above, copied verbatim. \textbackslash{}n Output JSON only: \{"answer": "<exact choice>"\}}
\item \emph{Assistant}: \texttt{\{"answer": "<emotion>"\}} (training target)
\end{itemize}
We calculate the loss only on the assistant response tokens, while the input/prompt tokens are masked (with label $-100$).

\textbf{Data split.}
CREMA-D (7{,}442 clips, 91 speakers, 6 emotions). 80/20 stratified split yields 1{,}002 held-out evaluation samples (167 per emotion class), saved deterministically for reproducibility.

\textbf{Baseline evaluation.}
Base Ultravox (no LoRA) achieves $17.3\%$ task accuracy on the held-out split (near chance for 6 classes), defaulting to ``neutral'' for most inputs.

\subsection{Full probe accuracy tables}

Table~\ref{tab:full_probes} reports the full probe accuracies (mean $\pm$ std over 5 seeds) at all four hook points for every model--dataset--information-type combination. The main finding is that probe accuracy for non-text information types remains well above chance even at the LLM's final layer, confirming that the information survives the decoder's processing. It is the decoder's \emph{use} of this information, not its \emph{presence}, that fails.

\begin{table}[h]
\centering
\small
\begin{tabular}{llcccc}
\toprule
\textbf{Model} & \textbf{Info type (Dataset)} & \textbf{Encoder} & \textbf{Adapter} & \textbf{LLM-16} & \textbf{LLM-Final} \\
\midrule
\multirow{5}{*}{Ultravox}
& Lexical (LS)   & $.422 \pm .014$ & $.279 \pm .018$ & $.570 \pm .023$ & $.535 \pm .027$ \\
& Speaker (LS)   & $.721 \pm .029$ & $.605 \pm .031$ & $.628 \pm .019$ & $.556 \pm .019$ \\
& Emotion (CD)   & $.679 \pm .012$ & $.621 \pm .007$ & $.603 \pm .011$ & $.557 \pm .006$ \\
& Speaker (CD)   & $.508 \pm .005$ & $.196 \pm .010$ & $.204 \pm .011$ & $.137 \pm .008$ \\
& Acoustic (E50) & $.760 \pm .033$ & $.647 \pm .041$ & $.645 \pm .035$ & $.581 \pm .032$ \\
\midrule
\multirow{5}{*}{Q2-Audio}
& Lexical (LS)   & $.270 \pm .006$ & $.242 \pm .015$ & $.506 \pm .018$ & $.471 \pm .021$ \\
& Speaker (LS)   & $.980 \pm .004$ & $.961 \pm .006$ & $.665 \pm .022$ & $.589 \pm .022$ \\
& Emotion (CD)   & $.866 \pm .007$ & $.874 \pm .010$ & $.880 \pm .006$ & $.877 \pm .004$ \\
& Speaker (CD)   & $.880 \pm .009$ & $.816 \pm .003$ & $.725 \pm .016$ & $.556 \pm .012$ \\
& Acoustic (E50) & $.955 \pm .005$ & $.946 \pm .011$ & $.962 \pm .004$ & $.965 \pm .003$ \\
\midrule
\multirow{3}{*}{LLaVA}
& Lexical (CO)     & $.629 \pm .011$ & $.620 \pm .010$ & $.675 \pm .004$ & $.669 \pm .006$ \\
& Object cat.\ (CO)  & $.784 \pm .011$ & $.770 \pm .009$ & $.793 \pm .010$ & $.808 \pm .011$ \\
& Super cat.\ (CO)   & $.802 \pm .011$ & $.797 \pm .007$ & $.833 \pm .009$ & $.842 \pm .012$ \\
\midrule
\multirow{6}{*}{Prism.-D}
& Lexical (CO)     & $.608 \pm .007$ & $.637 \pm .011$ & $.652 \pm .011$ & $.664 \pm .011$ \\
& Object cat.\ (CO)  & $.795 \pm .016$ & $.792 \pm .015$ & $.794 \pm .014$ & $.792 \pm .012$ \\
& Super cat.\ (CO)   & $.837 \pm .013$ & $.832 \pm .010$ & $.834 \pm .011$ & $.839 \pm .014$ \\
& Obj.\ count (CO)   & $.581 \pm .017$ & $.594 \pm .015$ & $.611 \pm .015$ & $.614 \pm .016$ \\
& Obj.\ size (CO)    & $.667 \pm .006$ & $.679 \pm .010$ & $.693 \pm .010$ & $.688 \pm .007$ \\
& Spatial spr.\ (CO) & $.470 \pm .013$ & $.465 \pm .025$ & $.475 \pm .010$ & $.487 \pm .008$ \\
\midrule
\multirow{6}{*}{Prism.-S}
& Lexical (CO)     & $.638 \pm .007$ & $.657 \pm .008$ & $.679 \pm .007$ & $.663 \pm .009$ \\
& Object cat.\ (CO)  & $.813 \pm .007$ & $.804 \pm .011$ & $.811 \pm .010$ & $.819 \pm .014$ \\
& Super cat.\ (CO)   & $.837 \pm .013$ & $.832 \pm .017$ & $.849 \pm .010$ & $.851 \pm .010$ \\
& Obj.\ count (CO)   & $.594 \pm .003$ & $.598 \pm .013$ & $.652 \pm .007$ & $.647 \pm .014$ \\
& Obj.\ size (CO)    & $.654 \pm .019$ & $.669 \pm .014$ & $.722 \pm .018$ & $.699 \pm .020$ \\
& Spatial spr.\ (CO) & $.478 \pm .021$ & $.459 \pm .015$ & $.507 \pm .014$ & $.481 \pm .023$ \\
\bottomrule
\end{tabular}
\caption{Full probe accuracy (mean $\pm$ std over 5 seeds) for all models, datasets, hook points, and information types.}
\label{tab:full_probes}
\end{table}

\subsection{Information retention overview}

Table~\ref{tab:overview} summarizes the information retention pattern across all five models. Retention is computed as the ratio of probe accuracy at the LLM's final layer to probe accuracy at the adapter output, normalized by chance. Values above 100\% indicate the LLM amplifies the signal; values below 100\% indicate degradation. The pattern to notice is that non-text aligned encoders exhibit an asymmetric pattern (lexical content is amplified while speaker identity degrades), while text aligned encoders preserve or improve all the information types, simply because the encoder has already filtered out non-text information.

\begin{table}[h]
\centering
\caption{Information retention (adapter $\to$ LLM-final, \%). Values ${>}100$\% indicate the LLM \emph{amplifies} information; ${<}100$\% indicates degradation. Text-aligned encoders retain all types; non-aligned encoders amplify lexical content while speaker identity collapses by up to 39\%. LS = LibriSpeech, CD = CREMA-D, CO = COCO.}
\label{tab:overview}
\small
\begin{tabular}{@{}llcll@{}}
\toprule
Model & Encoder & Lexical & Best non-text & Worst non-text \\
\midrule
\multicolumn{5}{@{}l}{\emph{Encoder not text-aligned}} \\[2pt]
Ultravox & Whisper & 192\% & 92\% (speaker, LS) & 70\% (speaker, CD) \\
Qwen2-Audio & Whisper & 195\% & 102\% (acoustic) & 61\% (speaker, LS) \\
Prismatic-D & DINOv2 & 104\% & 101\% (super cat.) & 100\% (object cat.) \\
\midrule
\multicolumn{5}{@{}l}{\emph{Encoder text-aligned}} \\[2pt]
LLaVA & CLIP & 108\% & 106\% (super cat.) & 105\% (object cat.) \\
Prismatic-S & SigLIP & 101\% & 102\% (obj.\ \& super cat.) & 102\% (obj.\ \& super cat.) \\
\bottomrule
\end{tabular}
\end{table}

\subsection{Detailed information retention}

Table~\ref{tab:retention} provides the full breakdown of information retention per model and information type, with probe accuracies at the adapter and LLM-final layers.

\begin{table}[h]
\centering
\small
\begin{tabular}{llccc}
\toprule
\textbf{Model} & \textbf{Info type (Dataset)} & \textbf{Adapter} & \textbf{LLM-Final} & \textbf{Retention (\%)} \\
\midrule
\multicolumn{5}{@{}l}{\emph{Encoder not text-aligned}} \\[2pt]
\multirow{5}{*}{Ultravox}
& Lexical (LS)   & .279 & .535 & 192 \\
& Speaker (LS)   & .605 & .556 & 92 \\
& Emotion (CD)   & .621 & .557 & 90 \\
& Speaker (CD)   & .196 & .137 & 70 \\
& Acoustic (E50) & .647 & .581 & 90 \\
\midrule
\multirow{5}{*}{Q2-Audio}
& Lexical (LS)   & .242 & .471 & 195 \\
& Speaker (LS)   & .961 & .589 & 61 \\
& Emotion (CD)   & .874 & .877 & 100 \\
& Speaker (CD)   & .816 & .556 & 68 \\
& Acoustic (E50) & .946 & .965 & 102 \\
\midrule
\multirow{3}{*}{Prism.-D}
& Lexical (CO)     & .637 & .664 & 104 \\
& Object cat.\ (CO) & .792 & .792 & 100 \\
& Super cat.\ (CO)  & .832 & .839 & 101 \\
\midrule
\multicolumn{5}{@{}l}{\emph{Encoder text-aligned}} \\[2pt]
\multirow{3}{*}{LLaVA}
& Lexical (CO)     & .620 & .669 & 108 \\
& Object cat.\ (CO) & .770 & .808 & 105 \\
& Super cat.\ (CO)  & .797 & .842 & 106 \\
\midrule
\multirow{3}{*}{Prism.-S}
& Lexical (CO)     & .657 & .663 & 101 \\
& Object cat.\ (CO) & .804 & .819 & 102 \\
& Super cat.\ (CO)  & .832 & .851 & 102 \\
\bottomrule
\end{tabular}
\caption{Information retention from adapter to LLM-final layer (probe accuracy ratio $\times$ 100). Retention ${>}$100\% means the LLM amplifies the signal; ${<}$100\% means degradation. Non-aligned encoders show selective amplification (lexical recovers, speaker degrades); text-aligned encoders retain or slightly improve all types. LS = LibriSpeech, CD = CREMA-D, E50 = ESC-50, CO = COCO.}
\label{tab:retention}
\end{table}

\subsection{Lipschitz constants and Wasserstein distances}

Table~\ref{tab:lipschitz} reports the empirical Lipschitz constants and Wasserstein distances used to evaluate the bound in \S\ref{sec:deep_dive}.

\begin{table}[h]
\centering
\small
\begin{tabular}{llccccc}
\toprule
\textbf{Model} & \textbf{Dataset} & $\Llog$ & $\Llog$ (p95) & $D$ & $\Wone$ & $\Llog \cdot \Wone$ \\
\midrule
\multicolumn{7}{@{}l}{\emph{Speech}} \\[2pt]
\multirow{3}{*}{Ultravox}
& LibriSpeech & 9.08  & 14.58 & 10.3  & 17.8 & 162 \\
& CREMA-D     & 5.82  & 9.70  & 17.0  & --  & -- \\
& ESC-50      & 3.66  & 5.92  & 20.7  & --  & -- \\
\midrule
\multirow{3}{*}{Q2-Audio}
& LibriSpeech & 0.49  & 0.67  & 122.2 & 38.9 & 19.1 \\
& CREMA-D     & 0.59  & 0.75  & 229.9 & --  & -- \\
& ESC-50      & 0.47  & 0.59  & 202.5 & --  & -- \\
\midrule
\multicolumn{7}{@{}l}{\emph{Vision}} \\[2pt]
LLaVA       & COCO & 0.29 & 0.36 & 28.5 & 46.3 & 13.4 \\
Prism.-D    & COCO & 0.40 & 0.49 & 35.7 & 50.4 & 20.2 \\
Prism.-S    & COCO & 1.12 & 1.44 & 16.2 & 47.9 & 53.7 \\
\bottomrule
\end{tabular}
\caption{Lipschitz constants ($\Llog$, mean and p95 of gradient norm distribution), representation diameter ($D$), Wasserstein-1 distance ($\Wone$ at LLM layer 16), and their product. $\Llog$ is content-dependent: it decreases as audio moves away from text (LibriSpeech $>$ CREMA-D $>$ ESC-50). $\Llog \cdot \Wone$ is largest for Ultravox (162, strongest degradation) and smallest for LLaVA (13.4, no degradation). Prismatic-S has $2.8\times$ higher $\Llog$ than Prismatic-D, consistent with the LLM being more responsive to text-aligned representations.}
\label{tab:lipschitz}
\end{table}

\subsection{Audio-token-only pooling}
\label{app:audio_only}

Our standard extraction mean-pools LLM-layer activations over all sequence positions (system prompt, audio tokens, assistant prompt). This could dilute the audio signal. Ultravox's processor exposes the audio token boundaries (\texttt{audio\_token\_start\_idx}, \texttt{audio\_token\_len}), letting us restrict pooling to audio positions only. We re-extracted LLM representations with audio-only pooling and re-ran probes to compare.

Table~\ref{tab:audio_only} shows the result: audio-only pooling produces nearly identical probe accuracies across all datasets and information types, with all deltas within $\pm$2.1\% and most under 1\%. Restricting to audio tokens does not help, and in several cases slightly hurts. This is consistent with the LLM mixing information across sequence positions during processing, so that the audio-derived signal is available at non-audio positions by layer 16.

\begin{table}[h]
\centering
\small
\begin{tabular}{llccc}
\toprule
\textbf{Dataset} & \textbf{Label} & \textbf{All-token} & \textbf{Audio-only} & \textbf{$\Delta$} \\
\midrule
\multicolumn{5}{@{}l}{\emph{LLM hidden layer 16}} \\[2pt]
LibriSpeech & Speaker  & .637$\pm$.026 & .639$\pm$.023 & $+$.002 \\
LibriSpeech & Lexical  & .591$\pm$.009 & .581$\pm$.019 & $-$.011 \\
CREMA-D     & Speaker  & .210$\pm$.006 & .213$\pm$.009 & $+$.003 \\
CREMA-D     & Lexical  & .999$\pm$.001 & .999$\pm$.001 & $+$.000 \\
CREMA-D     & Emotion  & .593$\pm$.004 & .595$\pm$.006 & $+$.002 \\
ESC-50      & Sound    & .681$\pm$.027 & .677$\pm$.031 & $-$.003 \\
\midrule
\multicolumn{5}{@{}l}{\emph{LLM final layer}} \\[2pt]
LibriSpeech & Speaker  & .565$\pm$.012 & .544$\pm$.020 & $-$.021 \\
LibriSpeech & Lexical  & .573$\pm$.024 & .559$\pm$.023 & $-$.014 \\
CREMA-D     & Speaker  & .140$\pm$.009 & .140$\pm$.008 & $-$.000 \\
CREMA-D     & Lexical  & .999$\pm$.001 & .999$\pm$.001 & $-$.000 \\
CREMA-D     & Emotion  & .556$\pm$.008 & .553$\pm$.004 & $-$.004 \\
ESC-50      & Sound    & .613$\pm$.016 & .603$\pm$.020 & $-$.011 \\
\bottomrule
\end{tabular}
\caption{Probe accuracy with all-token vs.\ audio-only pooling at LLM layers (Ultravox). All deltas are within $\pm$2.1\%, most under 1\%. Audio-only pooling does not improve probe accuracy. }
\label{tab:audio_only}
\end{table}

\begin{figure}[h]
\centering
\includegraphics[width=\linewidth]{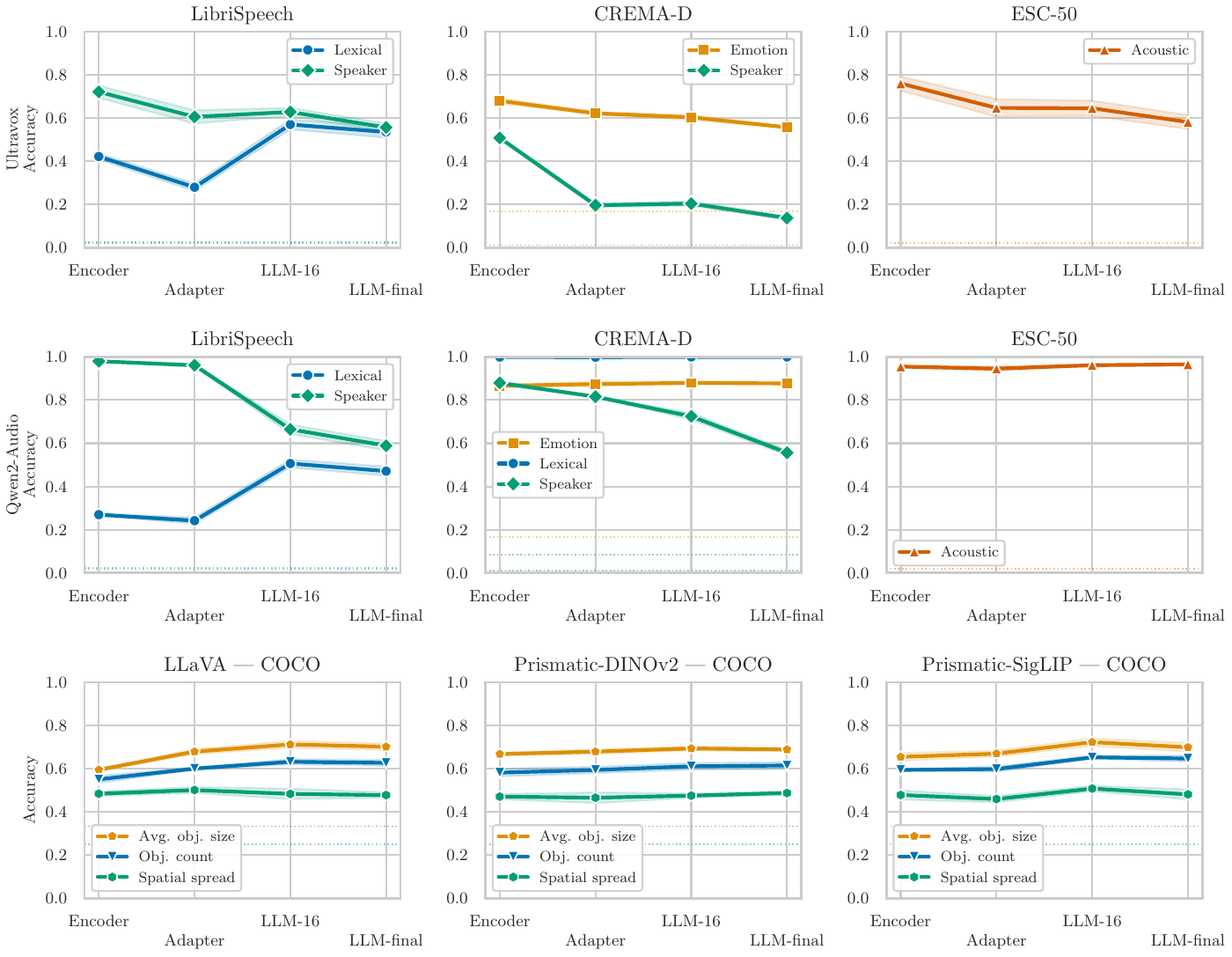}
\caption{Linear probe accuracy at four extraction points (encoder, adapter, LLM layer~16, LLM final) for all five models. For speech models (rows 1--2), lexical accuracy increases through the LLM while speaker identity degrades, illustrating the information accessibility gap. For vision models (row 3), LLaVA and Prismatic-S (text-aligned encoders) show improvement across all information types, while Prismatic-D (non-aligned) shows flat or declining non-textual accuracy. Datasets: LibriSpeech, CREMA-D, ESC-50 (speech); COCO (vision).}
\label{fig:app_trajectory}
\end{figure}

\begin{figure}[h]
\centering
\includegraphics[width=\linewidth]{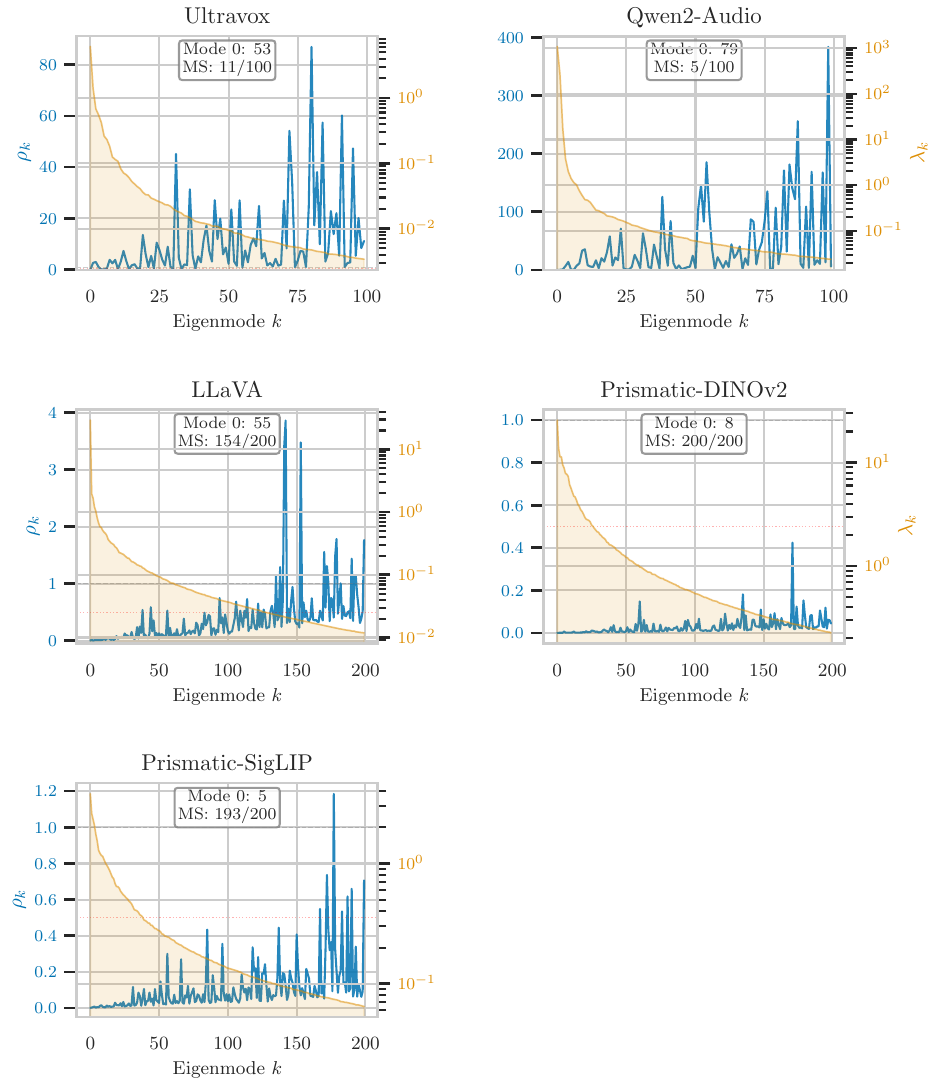}
\caption{Mode alignment profiles for all five models. Each panel shows the top modes (eigenvectors of the adapter-output covariance $\Sigma_M$, sorted by decreasing eigenvalue; 100 for speech models, 200 for vision). Blue (left axis): alignment score $\rho_k$, the ratio of text variance to modal variance along each mode (\S\ref{sec:mode_alignment}). Orange (right axis): eigenvalue $\lambda_k$ (log scale), the amount of adapter variance carried by each mode. Annotations show the percentage of total variance in Mode~0 and the count of modality-specific modes ($\rho_k < 0.5$). For non-aligned encoders, the highest-variance modes have $\rho_k \approx 0$: the adapter's dominant variation is along directions the decoder has never seen. For Prismatic-S (SigLIP, text-aligned), even Mode~0 is text-aligned ($\rho_0 = 0.83$).}
\label{fig:app_mode_alignment}
\end{figure}

The Prismatic comparison (Table~\ref{tab:prismatic}) isolates encoder text-alignment as the causal variable. Both VLMs share the same architecture, adapter, training recipe, and LLM backbone; only the vision encoder differs. SigLIP (text-aligned) shows consistent improvement through the LLM for all information types, while DINOv2 (not text-aligned) shows stagnation or minimal change for non-textual attributes.

\begin{table}[h]
\centering
\small
\begin{tabular}{llccc}
\toprule
& & \multicolumn{3}{c}{\textbf{Probe accuracy}} \\
\cmidrule(lr){3-5}
\textbf{Encoder} & \textbf{Info type} & \textbf{Adapter} & \textbf{LLM-final} & \textbf{$\Delta_{\text{LLM}}$} \\
\midrule
\multicolumn{5}{@{}l}{\emph{Text-describable attributes}} \\[2pt]
\multirow{3}{*}{DINOv2} & Lexical    & .637 & .664 & \textcolor{darkblue}{+4.2\%} \\
                         & Object cat.  & .792 & .792 & $-$0.0\% \\
                         & Super cat.   & .832 & .839 & +0.8\% \\
\midrule
\multirow{3}{*}{SigLIP}  & Lexical    & .657 & .663 & +0.9\% \\
                         & Object cat.  & .804 & .819 & \textcolor{darkblue}{+1.9\%} \\
                         & Super cat.   & .832 & .851 & \textcolor{darkblue}{+2.3\%} \\
\midrule
\multicolumn{5}{@{}l}{\emph{Non-textual attributes}} \\[2pt]
\multirow{3}{*}{DINOv2} & Obj.\ count  & .594 & .614 & +3.4\% \\
                         & Obj.\ size   & .679 & .688 & +1.3\% \\
                         & Spatial spr. & .465 & .487 & +4.7\% \\
\midrule
\multirow{3}{*}{SigLIP}  & Obj.\ count  & .598 & .647 & \textcolor{darkblue}{+8.2\%} \\
                         & Obj.\ size   & .669 & .699 & \textcolor{darkblue}{+4.5\%} \\
                         & Spatial spr. & .459 & .481 & +4.8\% \\
\bottomrule
\end{tabular}
\caption{Prismatic controlled comparison (full results). $\Delta_{\text{LLM}}$ = change from adapter to LLM-final. Top: text-describable attributes (named in captions). Bottom: non-textual attributes (object count, average object size, spatial spread; derived from annotations, not captions). Text baseline accuracy is substantially lower (.415--.495) for non-textual attributes, confirming their visual nature. Same architecture, same LLM, only encoder differs.}
\label{tab:prismatic}
\end{table}

The mode alignment spectrum (Table~\ref{tab:mode_alignment}) reveals which directions carry the mismatch. For each model, we decompose the adapter output covariance into eigenmodes and measure their text alignment. The dominant eigenmode is modality-specific ($\rho_k \ll 1$) for all non-text-aligned encoders, while for SigLIP even the dominant mode is text-aligned ($\rho_k = 0.83$).

\begin{table}[h]
\centering
\small
\begin{tabular}{lcccc}
\toprule
\textbf{Model} & \textbf{Mode 0 $\rho_k$} & \textbf{Mean $\rho_k$ (top 10)} & \textbf{MS modes} & \textbf{MS var.} \\
\midrule
Ultravox    & 0.001  & 1.76  & 11/100  & 63.6\% \\
Qwen2-Audio & 0.010  & 7.21  & 5/100   & 97.6\% \\
LLaVA       & 0.013  & 3.94  & 9/100   & 71.9\% \\
Prismatic-D & 0.034  & 0.087 & 53/100  & 71.0\% \\
Prismatic-S & 0.827  & 1.95  & 14/100  & 24.6\% \\
\bottomrule
\end{tabular}
\caption{Mode alignment summary. MS modes = modes with $\rho_k < 0.5$ (modality-specific). MS var.\ = fraction of top-100 eigenmode variance in MS modes. Mode~0 is modality-specific ($\rho_k \ll 1$) for all non-text-aligned encoders; for SigLIP (text-aligned), even Mode~0 is text-aligned ($\rho_k = 0.83$). Extending to all 4096 modes strengthens the pattern: tail modes are overwhelmingly text-aligned with negligible individual variance, so the MS/TA split is driven by the high-variance modes reported here.}
\label{tab:mode_alignment}
\end{table}

\section{Additional limitations}
\label{app:additional_limitations}

\textbf{Scope.}
We test speech and vision across five models. The framework applies to any multimodal LLM with a text-trained decoder, but video/3D remain untested.

\textbf{Assumption~\ref{ass:shared} (shared marginal).} We assume $\PM(C,Y) = \PT(C,Y)$. This holds when the multimodal model is trained on text-centric tasks (transcription, captioning). Violations add a marginal-shift term independent of representation geometry.

\textbf{Generality of the GMI constraint.} The mechanistic explanation is not architectural: the text-shaped scoring rule of the multimodal LLM decoder is conditioned only on the text-dominating training objective, and holds regardless of the adapter implementation (linear projection, MLP, Q-Former, perceiver resampler, discrete codebook, etc.). The GMI constraint is more broadly relevant for multimodal modelling: in the absence of a training objective that explicitly seeks out each modality, the decoder will remain indifferent to non-text directions.

\end{document}